\setlist{itemsep=0pt,topsep=1ex,partopsep=1ex,parsep=1ex}
\def\thmhead@plain#1#2#3{%
  \thmname{#1}\thmnumber{\@ifnotempty{#1}{ }\@upn{#2}}%
  \thmnote{ {\the\thm@notefont#3}}}
\let\thmhead\thmhead@plain
\renewcommand{\paragraph}{%
  \@startsection{paragraph}{4}%
  {\z@}{0.5ex \@plus 0.1ex \@minus 0.3ex}{-1em}%
  {\normalfont\normalsize\bfseries}%
}
\newtheorem{definition}{Definition}
\newtheorem{proposition}{Proposition}
\newcommand{\todo}[1]{\textcolor{red}{\textbf{Fix:} \textit{#1}}}
\newif\ifarxiv
\let\OLDthebibliography\thebibliography
\renewcommand\thebibliography[1]{
  \OLDthebibliography{#1}
  \setlength{\parskip}{0.1ex}
  \setlength{\itemsep}{0.1ex plus 0.3ex}
}
\def\ps@myheadings{%
    \let\@oddfoot\@empty\let\@evenfoot\@empty
    \def\@evenhead{\thepage\hfil\slshape\leftmark}%
    \def\@oddhead{{\slshape\rightmark}\hfil\thepage}%
    \let\@mkboth\@gobbletwo
    \let\sectionmark\@gobble
    \let\subsectionmark\@gobble
    }
  \renewcommand\maketitle{\begin{titlepage}%
  \let\footnotesize\small
  \let\footnoterule\relax
  \let \footnote \thanks
  \null\vfil
  \vskip 60\p@
  \begin{center}%
    {\LARGE \@title \par}%
    \vskip 3em%
    {\large
     \lineskip .75em%
      \begin{tabular}[t]{c}%
        \@author
      \end{tabular}\par}%
      \vskip 1.5em%
    {\large \@date \par}
  \end{center}\par
  \@thanks
  \vfil\null
  \end{titlepage}%
  \setcounter{footnote}{0}%
}
\renewcommand\maketitle{\par
  \begingroup
    \renewcommand\thefootnote{\@fnsymbol\c@footnote}%
    \def\@makefnmark{\rlap{\@textsuperscript{\normalfont\@thefnmark}}}%
    \long\def\@makefntext##1{\parindent 1em\noindent
            \hb@xt@1.8em{%
                \hss\@textsuperscript{\normalfont\@thefnmark}}##1}%
    \if@twocolumn
      \ifnum \col@number=\@ne
        \@maketitle
      \else
        \twocolumn[\@maketitle]%
      \fi
    \else
      \newpage
      \global\@topnum\z@   
      \@maketitle
    \fi
    \@thanks
  \endgroup
  \setcounter{footnote}{0}%
}
\begin{document}

\title{The Lovász-Softmax loss: A tractable surrogate for the optimization of the intersection-over-union measure in neural networks}

\author{Maxim Berman \quad Amal Rannen Triki \quad Matthew B. Blaschko\\
	Dept.\ ESAT, Center for Processing Speech and Images\\
 	KU Leuven, Belgium\\
{\tt\small \{maxim.berman,amal.rannen,matthew.blaschko\}@esat.kuleuven.be} \\
}

\maketitle

\begin{abstract}
The Jaccard index, also referred to as the intersection-over-union score, is commonly employed in the evaluation of image segmentation results given its perceptual qualities, scale invariance -- which lends appropriate relevance to small objects, and appropriate counting of false negatives, in comparison to per-pixel losses. We present a method for direct optimization of the mean intersection-over-union loss in neural networks, in the context of semantic image segmentation, based on the convex Lovász extension of submodular losses. The loss is shown to perform better with respect to the Jaccard index measure than the traditionally used cross-entropy loss. We show quantitative and qualitative differences between optimizing the Jaccard index per image versus optimizing the Jaccard index taken over an entire dataset. We evaluate the impact of our method in a semantic segmentation pipeline and show substantially improved intersection-over-union segmentation scores on the Pascal VOC and Cityscapes datasets using state-of-the-art deep learning segmentation architectures.
\end{abstract}

\section{Introduction}

We consider the task of semantic image segmentation, where each pixel $i$ of a given image has to be classified into an object class $c \in \mathcal{C}$. 
Most of the deep network based segmentation methods rely on logistic regression, optimizing the cross-entropy loss \cite{Goodfellow-et-al-2016}
\begin{equation}\label{xloss}
\operatorname{loss}(\vec{f}) = - \frac{1}{p}\sum_{i=1}^p \log f_i(y_i^*) ,
\end{equation}
with $p$ the number of pixels in the image or minibatch considered, $y_i^*\in\mathcal{C}$ the ground truth class of pixel $i$, $f_i(y_i^*)$ the network probability estimate of the ground truth probability of pixel~$i$, and $\vec{f}$ a vector of all network outputs $f_i(c)$. 
This supposes that the unnormalized scores $F_i(c)$ of the network have been mapped to probabilities through a \emph{softmax} unit
\begin{equation}\label{softmax}
f_i(c) = \frac{e^{F_i(c)}}{\sum_{c' \in \mathcal{C}} e^{F_i(c')}}\quad\forall{i \in [1, p]}, \forall{c \in \mathcal{C}}.
\end{equation}
Loss~\eqref{xloss} generalizes the logistic loss
and leads to smooth optimization.
During testing, the decision function commonly used consists in picking the class of maximum score: the predicted class for a given pixel $i$ is
$\label{eq:argmax-decision}
\tilde{y}_i = \argmax_{c\in\mathcal{C}} F_i(c).
$

The measure of the cross-entropy loss on a validation set is often a poor indicator of the quality of the segmentation. A better performance measure commonly used for evaluating segmentation masks is the Jaccard index, also called the intersection-over-union (IoU) score. 
Given a vector of ground truth labels $\gt$ and a vector of predicted labels $\pred$, the Jaccard index of class $c$ is defined as~\cite{Jaccard1901}
\begin{equation}\label{eq:JaccardIndex}
J_c(\gt, \pred) = \frac{|\{\gt = c\} \cap\{\pred = c\}|}{|\{\gt = c\} \cup \{\pred = c\}|},
\end{equation}
which gives the ratio in $[0, 1]$ of the intersection between the ground truth mask and the evaluated mask over their union, with the convention that $0/0=1$. 
A corresponding loss function to be employed in empirical risk minimization is
\begin{equation}\label{eq:JaccardLoss}
\J(\gt, \pred) = 1 - J_c(\gt, \pred). 
\end{equation}
For multilabel datasets, the Jaccard index is commonly averaged across classes, yielding the mean IoU (mIoU). 

We develop here a method for optimizing the performance of a discriminatively trained segmentation system with respect to the Jaccard index. We show that a piecewise linear convex surrogate to the Jaccard loss based on the Lovász extension of submodular set functions yields a consistent improvement of predicted segmentation masks as measured by the Jaccard index.

Although the Jaccard index is often computed globally, over every pixel of the evaluated segmentation dataset~\cite{pascal-voc-2012}, it can also be computed independently for each image. Using the per-image Jaccard index is known to have better perceptual accuracy by reducing the bias towards large instances of the object classes in the dataset~\cite{csurka2013good}.  Due to these favorable properties, and the empirical risk minimization principle of optimizing the loss of interest at training time~\cite{Vapnik1995}, optimization of the Jaccard loss during training has been frequently considered in the literature.  However, in contrast to the present work, existing methods all have significant shortcomings that do not allow plug-and-play application to a wide range of learning architectures.

\cite{Nowozin_2014_CVPR} provides a Bayesian framework for optimization of the Jaccard index. The author proposes an approximate algorithm using parametric linear programming to optimize a statistical approximation to the objective. \cite{Ahmed_2015_ICCV} optimize IoU by selecting among a few candidate segmentations, instead of directly optimizing the model with respect to the loss. 
\cite{Blaschko2008d} optimize the Jaccard loss in a structured output SVM, but are only able to do so with a branch-and-bound optimization over bounding boxes and not full segmentations.

Alternative approaches train binary classifiers, but on data that are sampled to capture high Jaccard index.
\cite{Bourdev2010,Hariharan2014} use IoU and related overlap measures to define training sets for binary classifiers in a complex multi-stage training. Such sampling-based approaches clearly induce suboptimality in the empirical risk approximation and do not lend themselves to convenient modular application in a deep learning setting.

Still other recent high-impact research has highlighted the need for optimization of the Jaccard index, but resort to binary training as a proxy, presumably for lack of a convenient and flexible method of directly optimizing the loss of interest.
\cite{Long_2015_CVPR} train with logistic loss and test with the Jaccard index.  The paper introducing the highly influential OverFeat network specifically addresses the shortcoming in the discussion section \cite{DBLP:journals/corr/SermanetEZMFL13}:
``We are
using $\ell_2$
loss, rather than directly optimizing the intersection-over-union (IoU) criterion on which
performance is measured.  Swapping the loss to this should be
possible....''  However, this is left to future work.  In this paper, we develop the necessary plug-and-play loss layer to enable flexible direct minimization of the Jaccard loss in a deep learning setting, while demonstrating its applicability for training state-of-the-art image segmentation networks.

Our approach is based on the recent development of general strategies for generating convex surrogates to submodular loss functions, including the Lovász hinge \cite{yu:hal-01151823}. 
Based on the result that the Jaccard loss is submodular, this strategy is directly applicable. 
We moreover generalize this approach to a multiclass setting by considering a regression-based variant, using a softmax activation layer to naturally map network probability estimates to the Lovász extension of the Jaccard loss. 
In this work, we
\begin{enumerate*}[label=(\roman*)]
\item apply the Lovász hinge with Jaccard loss to the problem of binary image segmentation (Sec.~\ref{sec:binarySurrogate}),
\item propose a surrogate for the multi-class setting, the Lovász-Softmax loss (Sec.~\ref{sec:multiclassSurrogate}),
\item design a batch-based IoU surrogate that acts as an efficient proxy to the dataset IoU measure (Sec.~\ref{sec:differentMeasures}),
\item analyze and compare the properties of different IoU-based measures, and
\item demonstrate a substantial and consistent improvement in performance measured by the Jaccard index in state-of-the-art deep learning based segmentation systems.
\end{enumerate*}

\section{Optimization surrogates for submodular loss functions} 
In order to optimize the Jaccard index in a continuous optimization framework, we consider smooth extensions of this discrete loss. The extensions are based on submodular analysis of set functions, where the set function maps from a set of mispredictions to the set of real numbers \cite[Equation~(6)]{yu:hal-01151823}.

For a segmentation output $\pred$ and ground truth $\gt$, we define the set of mispredicted pixels for class $c$ as
\begin{equation}\label{eq:mispred}
\mathbf{M}_c(\gt, \pred) = \{\gt = c, \pred \neq c\} \cup 
\{\gt \neq c, \pred = c\}.
\end{equation}
For a fixed ground truth $\gt$, the Jaccard loss in Eq.~\eqref{eq:JaccardLoss} can be rewritten as a function of the set of mispredictions 
\begin{equation}\label{eq:JaccardLossMispred}
  \J \colon \mathbf{M}_c \in \{0, 1\}^p \mapsto 
  \frac{|\mathbf{M}_c|}
     {|\{\gt = c\} \cup \mathbf{M}_c|}.
\end{equation}
Note that for ease of notation, we naturally identify subsets of pixels with their indicator vector in the discrete hypercube~$\{0, 1\}^p$. 

In a continuous optimization setting, we want to assign a loss to any vector of errors $\vec{m} \in \mathbb{R}_+^p$, and not only to discrete vectors of mispredictions in $\{0, 1\}^p$. 
A natural candidate for this loss is the convex closure of function~\eqref{eq:JaccardLossMispred} in $\mathbb{R}^p$. 
In general, computing the convex closure of set functions is NP-hard. 
However, the Jaccard set function~\eqref{eq:JaccardLossMispred} has been shown to be submodular~\cite[Proposition~11]{Yu2015b}.
\begin{definition}[{\cite{fujishige2005submodular}}]
A set function $\setF: \{0, 1\}^p \rightarrow \mathbb{R}$ is \emph{submodular} if for all $\mathbf{A}, \mathbf{B} \in \{0, 1\}^p$
\begin{equation}
\setF(\mathbf{A}) + \setF(\mathbf{B}) \geq \setF(\mathbf{A} \cup \mathbf{B}) + \setF(\mathbf{A} \cap \mathbf{B}).
\end{equation}
\end{definition}
The convex closure of submodular set functions is~\emph{tight} and~\emph{computable in polynomial time}~\cite{lovasz1983submodular}; it corresponds to its Lovász extension.
\begin{definition}[{\cite[Def. 3.1]{bach2013learning}}]
The \emph{Lovász extension} of a set function $\setF\colon \{0, 1\}^p \to \mathbb{R}$ such that $\setF(\vec{0}
) = 0$ is defined by
\begin{equation}\label{eq:lovasz_extension}
\ext{\setF}\colon \vec{m} \in \mathbb{R}^p \mapsto \sum_{i=1}^{p} m_i \, g_i(\vec{m})\end{equation}
\begin{equation}\label{eq:lovasz_derivative}
\text{with}\,\,g_i(\vec{m}) =
\setF(\{\perm_1, \ldots, \perm_i\})- \setF(\{\perm_1, \ldots, \perm_{i-1}\}),
\end{equation}
$\vec{\pi}$ being a permutation ordering the components of $\vec{m}$ in decreasing order, i.e. $x_{\perm_1} \geq x_{\perm_2} \ldots \geq x_{\perm_p}$.
\end{definition}

Let $\Delta$ be a set function encoding a submodular loss such as the Jaccard loss defined in Equation~\eqref{eq:JaccardLossMispred}. By submodularity $\ext{\Delta}$ is the tight convex closure of $\Delta$~\cite{lovasz1983submodular}. $\ext{\Delta}$ is piecewise linear and interpolates the values of $\Delta$ in $\mathbb{R}^p \setminus \{0, 1\}^p$, while having the same values as $\Delta$ on $\{0, 1\}^p$, i.e. on any set of mispredictions (Equation~\eqref{eq:mispred}).
Intuitively, if $\vec{m}$ is a vector of all pixel errors, $\ext{\Delta}(\vec{m})$ is a sum weighting these errors according to the interpolated discrete loss. 
By its convexity and continuity, $\ext{\Delta}$ is a natural surrogate for the minimization of $\Delta$ with first-order continuous optimization, such as in neural networks. 
The elementary operations involved to compute $\ext{\Delta}$ (\emph{sort}, \emph{dot~product}, \ldots) are differentiable and implemented on GPU in current deep learning frameworks. 
The vector $\vec{g}(\vec{m})$ of which the components are defined in Equation~\eqref{eq:lovasz_derivative} directly corresponds to the derivative of $\ext{\Delta}$ with respect to $\vec{m}$.

In the following, we consider two different settings in which we construct surrogate losses by using the Lovász extension and specifying the vector of errors $\vec{m}$ that we use:
\begin{enumerate}
\item The foreground-background segmentation problem, which leads to the Lovász hinge, as described in~\cite{Yu2015b};
\item The multiclass segmentation problem, which leads to the Lovász-Softmax loss, incorporating the softmax operation in the Lovász extension.
\end{enumerate}

\subsection{Foreground-background segmentation}\label{sec:binarySurrogate}

\begin{figure}[ht]
  \centering
  \begin{subfigure}{0.45\linewidth}    \centering\includegraphics[height=1.75cm]{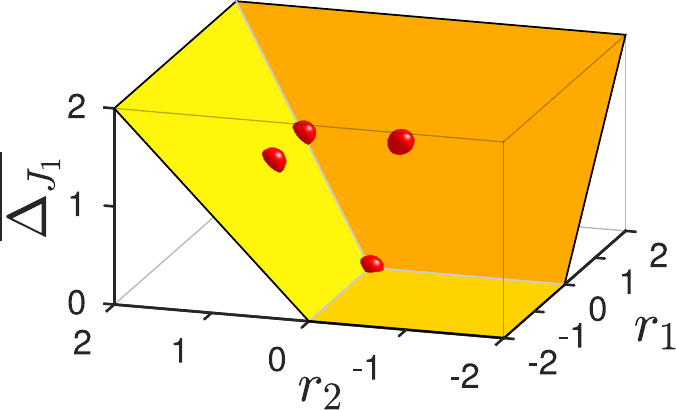}
    \caption{$\GT = [-1, -1]$}
  \end{subfigure}%
\begin{subfigure}{0.45\linewidth}    \centering\includegraphics[height=1.75cm]{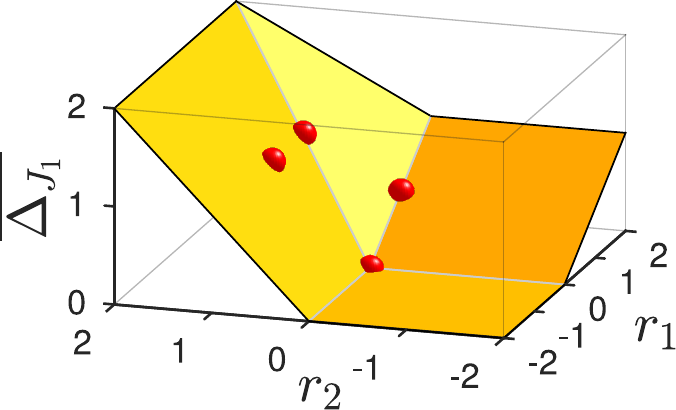}
    \caption{$\GT = [-1, 1]$}
  \end{subfigure}
 \begin{subfigure}{0.45\linewidth}    \centering\includegraphics[height=1.75cm]{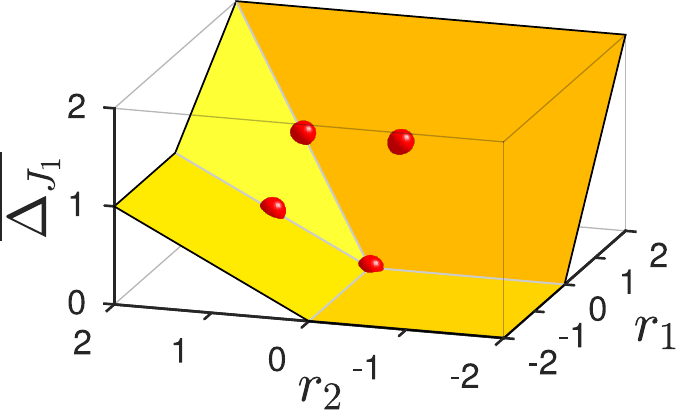}
    \caption{$\GT = [1, -1]$}
  \end{subfigure}%
 \begin{subfigure}{0.45\linewidth}    \centering\includegraphics[height=1.75cm]{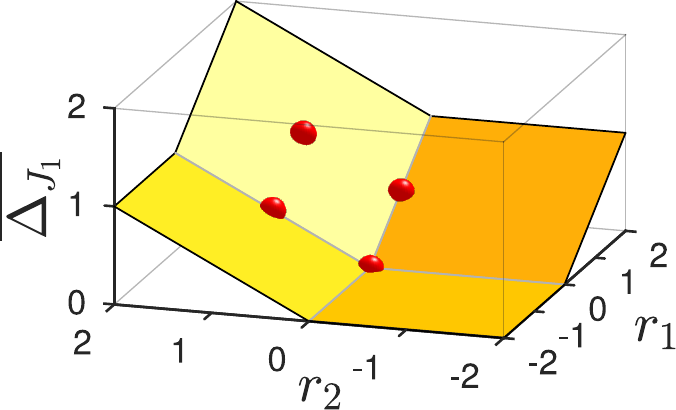}
    \caption{$\GT = [1, 1]$}
  \end{subfigure}%
\caption{Lovász hinge in the case of two pixel predictions for the four possible ground truths $\GT$, as a function of the relative margins $r_i = 1 - F_i(\vec{x})\,y^*_i$ for $i = 1, 2$. The red dots indicate the values of the discrete Jaccard index.\label{fig:lovasz_hinge}
}
\end{figure}
In the binary case, we consider the optimization of the Jaccard index for the foreground class $\Jf$. 
We use a max-margin classifier: for an image $\mathbf{x}$, we define
\begin{itemize}
\item $y^*_i \in \{-1, 1\}$ the ground truth label of pixel $i$,
\item $F_i(\mathbf{x})$ the $i$-th element of the output scores $\vec{F}$ of the model, such that the predicted label $\tilde{y}_i= \operatorname{sign}(F_i(\vec{x}))$,
\item $\marg_i = \max(1 - F_i(\vec{x})\,y^*_i, 0) $ the hinge loss associated with the prediction of pixel $i$.
\end{itemize}

In this setting, the vector of hinge losses $\margs \in \mathbf{R}^{+}$ is the vectors of errors discussed before. 
With $\ext{\Jf}$ the Lovász extension to $\Jf$, the resulting loss surrogate
\begin{equation}\label{eq:lovasz_hinge}
\mathrm{loss}(\vec{F}) = \ext{\Jf}(\margs(\vec{F}))
\end{equation}
is the Lovász hinge applied to the Jaccard loss, as described in~\cite{yu:hal-01151823}. 
It is piecewise linear in the output scores $\vec{F}$ as a composition of piecewise linear functions. 
Moreover, by choice of the hinge loss for the vector $\vec{m}$, the Lovász hinge reduces to the standard hinge loss~\cite{vapnik1998statistical} in the case of a single prediction, or when using the Hamming distance instead of the Jaccard loss as a basis for the construction.
Figure~\ref{fig:lovasz_hinge} illustrates the extension of the Jaccard loss in the case of the prediction of two pixels, illustrating the convexity and the tightness of the surrogate.

\subsection{Multiclass semantic segmentation} \label{sec:multiclassSurrogate}

\begin{figure}[ht]
  \centering
  \begin{subfigure}{0.45\linewidth}    \centering\includegraphics[height=2cm]{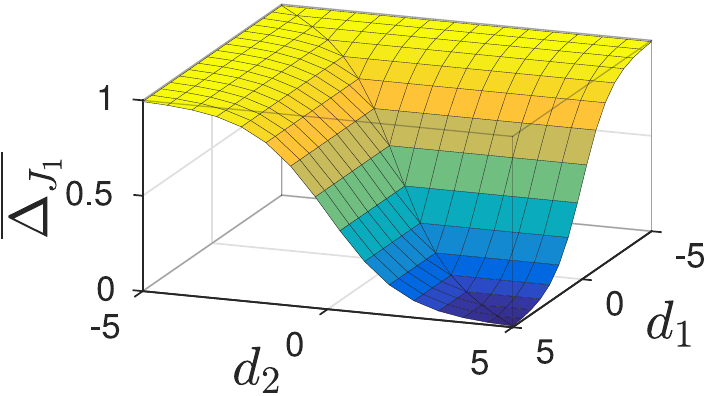}
    \caption{$\GT = [-1, -1]$}
  \end{subfigure}
\begin{subfigure}{0.45\linewidth}    \centering\includegraphics[height=2cm]{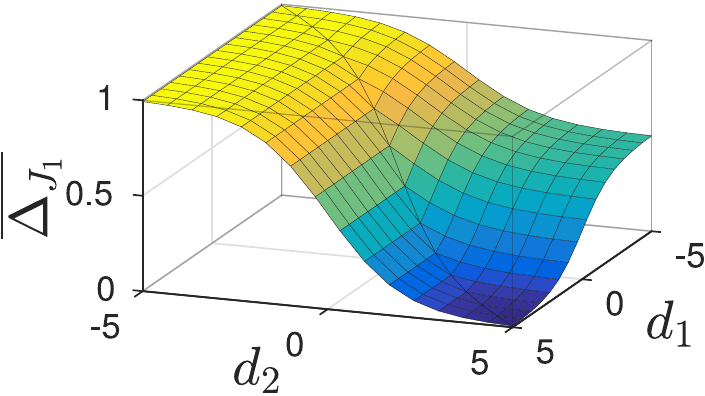}
    \caption{$\GT = [-1, 1]$}
  \end{subfigure}
 \begin{subfigure}{0.45\linewidth}    \centering\includegraphics[height=2cm]{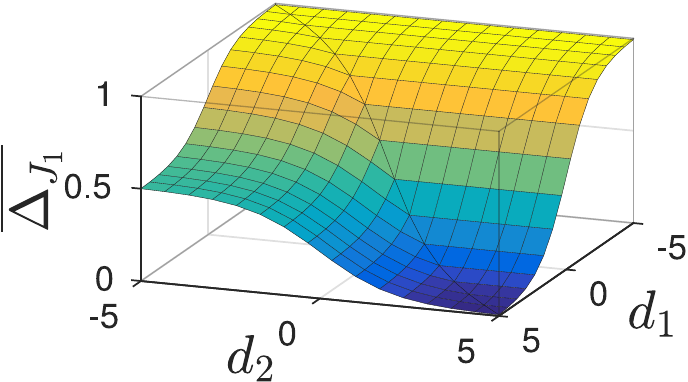}
    \caption{$\GT = [1, -1]$}
  \end{subfigure}
 \begin{subfigure}{0.45\linewidth}    \centering\includegraphics[height=2cm]{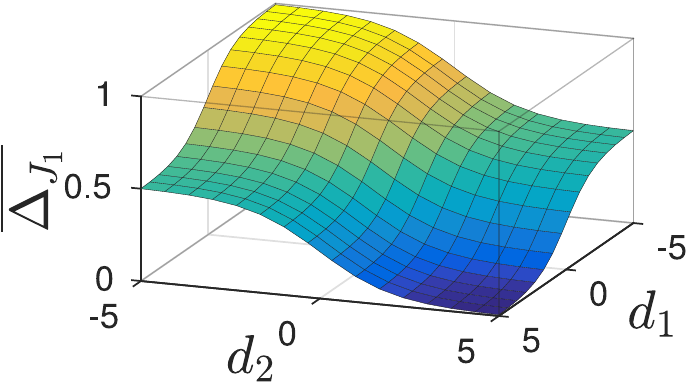}
    \caption{$\GT = [1, 1]$}
  \end{subfigure}%
\caption{Lovász-Softmax for the foreground class, with two classes $\{-1, 1\}$ and two pixels, for each ground truth labeling GT. 
The loss is plotted against the difference of unnormalized scores $d_i = F_i(y^*_i) - F_i(1 - y^*_i)$ for $i = 1, 2$.\label{fig:lovasz_softmax}
}
\end{figure}

In a segmentation setting with more than two classes, we propose a surrogate based on a logistic output instead of using a max-margin setting. 
Specifically we map the output scores of the model to probability distributions using a softmax unit as is done traditionally in the case of the cross-entropy loss.

We use the class probabilities $f_i(c) \in [0, 1]$ defined in Equation~\eqref{softmax} to construct a vector of pixel errors $\vec{m}(c)$ for class $c\in\mathcal{C}$  defined by
\begin{equation}
m_i(c) = \begin{cases}
        1 - f_i(c) & \text{if } c = y^*_i, \\
        f_i(c) & \text{ otherwise. } \\
      \end{cases}
\end{equation}
We use the vector of errors $\vec{m}(c) \in [0, 1]^p$ to construct the loss surrogate to $\J$, the Jaccard index for class $c$:
\begin{equation}\label{eq:lovasz1class}
\operatorname{loss}(\vec{f}(c)) = \ext{\J}(\vec{m}(c))
\end{equation}

When considering the class-averaged mIoU metric, common in semantic segmentation, we average the class-specific surrogates; hence we define the Lovász-Softmax loss as
\begin{equation}\label{eq:softmax-lovasz-average}
\operatorname{loss}(\vec{f}) = \frac{1}{|\mathcal{C}|} \sum_{c \in \mathcal{C}}\ext{\J}(\vec{m}(c))
\end{equation}
which is piecewise linear in $\vec{f}$, the normalized network outputs. Figure~\ref{fig:lovasz_softmax} show this loss as a function of the unnormalized vector outputs $\vec{F}$ for a prediction of two pixels. In the limit of large scores (confident outputs), the probability vectors at each pixel $(f_i(c))_{c\in\mathcal{C}}$ are close to an indicator vector, and we recover the values of the discrete Jaccard index for the corresponding discrete labeling with respect to the ground truth, as seen on the figure. 

\section{Optimization of intersection over union}

Naïve computation of the Lovász extension (Equation~\eqref{eq:lovasz_extension}) applied to $\J$ can be achieved by sorting the elements of $\margs$ in $\mathcal{O}(p \log p)$ time and doing $\mathcal{O}(p)$ calls to $\J$. However, if we compute $\J$ by Equation~\eqref{eq:JaccardIndex}, each call will cost $\mathcal{O}(n)$. As $\perm$ is known in advance, we may simply keep track of the cumulative number of false positives and negatives in $\{\perm_1, \ldots, \perm_i\}$ for increasing $i$ yielding an amortized $\mathcal{O}(1)$ cost per evaluation of $\J$ (cf.~\cite[Equation~(43)]{Yu2015b}). This computation also yields the gradient $\vec{g}(\vec{m})$ at the same computational cost. 
This is a powerful result implying that a tight surrogate function for the Jaccard loss is available and computable in time $\mathcal{O}(p \log p)$. 
The algorithm for computing the gradient of the loss surface resulting from this procedure is summarized in Algorithm~\ref{alg:Gamma}.  

\begin{algorithm}
\renewcommand{\algorithmicrequire}{\textbf{Inputs:}}
\renewcommand{\algorithmicensure}{\textbf{Output:}}
\caption{Gradient of the Jaccard loss extension $\ext{\J}$}\label{alg:Gamma}
\begin{algorithmic}[1]
\REQUIRE vector of errors $\vec{m}(c) \in \mathbb{R}_+^p$\\
class foreground pixels $\vec{\delta} = \{\gt = c\} \in \{0, 1\}^p$
\ENSURE $\vec{g}(\vec{m})$ gradient of $\ext{\J}$ (Equation~\eqref{eq:lovasz_derivative})
\STATE $\vec{\pi} \gets \text{decreasing sort permutation for } \vec{m}$
\STATE $\vec{\delta_\pi} \gets (\delta_{\pi_i})_{i\in [1, p]}$
\STATE $\text{\bfseries intersection} \gets \text{sum}(\vec{\delta}) - \text{\bfseries cumulative\_sum}(\vec{\delta_\pi})$
\STATE $\text{\bfseries union} \gets \text{sum}(\vec{\delta}) + \text{\bfseries cumulative\_sum}(1 - \vec{\delta_\pi})$
\STATE $\vec{g}  \gets 1 - \text{\bfseries intersection}/\text{\bfseries union}$
\IF{$p > 1$}
	\STATE $\vec{g}[2:p] \gets \vec{g}[2:p] - \vec{g}[1:p-1]$
\ENDIF 
\RETURN $\vec{g}_{\vec{\pi}^{-1}}$
\end{algorithmic}
\end{algorithm}

\subsection{Image--mIoU vs.\ dataset--mIoU}\label{sec:differentMeasures}
The official metric of the semantic segmentation task in Pascal VOC~\cite{everingham2010pascal} and numerous other popular competitions is the dataset--mIoU, 
\begin{equation}
\text{dataset--mIoU} = \frac{1}{|\mathcal{C}|}\sum_{c\in\mathcal{C}} J_c(\gt, \pred),
\end{equation}
where $\gt$ and $\pred$ contain the ground truth and predicted labels of~\emph{all pixels in the testing dataset}. 

The Lovász-Softmax loss considers an ensemble of pixel predictions for the computation of the surrogate to the Jaccard loss. 
In a stochastic gradient descent setting, only a small numbers of pixel predictions are taken into account in one optimization step. 
Therefore, the Lovász-Softmax loss cannot directly optimize the dataset--mIoU. 
We can compute this loss over individual images, optimizing for the expected image--mIoU, or over each minibatch, optimizing for the expected batch--mIoU. 
However, it is not true in general that
\begin{equation}
\mathbb{E}{\left(\frac{\text{intersection}}{\text{union}}\right)}
\approx
\frac{\mathbb{E}(\text{intersection})}{\mathbb{E}(\text{union})},
\end{equation}
and we found in our experiments that optimizing the image--mIoU or batch--mIoU generally degrades the dataset--mIoU compared with optimizing the standard cross-entropy loss.

The main difference between the dataset and image--mIoU measures resides in the absent classes. When the network wrongly predicts a single pixel belonging to a class that is absent from an image, the image intersection over union loss corresponding to that class changes from $0$ to $1$. By contrast, a single pixel misprediction does not substantially affect the dataset--mIoU metric. 

Given this insight, we propose as an heuristic for optimizing the dataset--mIoU to compute the batch Lovász-Softmax surrogate by taking the average in Equation~\eqref{eq:softmax-lovasz-average} only over the classes present in the batch's ground truth. 
As a result, the loss is more stable to single predictions in absent classes, mimicking the dataset--mIoU. 
As outlined in our experiments, the optimization of the Lovász-Softmax restricted to classes present in each batch, effectively translates into gains for the dataset--mIoU metric.

We propose an additional trick for the optimization of the dataset--mIoU. Since the mIoU gives equal importance to each class, and to make the expectation of the batch--mIoU closer to the dataset--mIoU, it seems important to ensure that we feed the network with samples from all classes during training. 
In order to enforce this requirement, we sample the patches from the training by cycling over every classes, such that each class is visited at least once every $|\mathcal{C}|$ patches. This method is referred to as \emph{equibatch} in our experiments.

\section{Experiments}
\subsection{Synthetic experiment}\label{sec:toy}

We demonstrate the relevance of using the Jaccard loss for binary segmentation with a synthetic binary image segmentation experiment. We generate $N = 10$ binary images of size $50 \times 50$ representing circles of various radius, and extract for each pixel $i$ a single feature using a unit variance Gaussian perturbation of the ground truth, $f_i  \sim \mathcal{N}(\epsilon,\,1)$ where $\epsilon = 1/2$ for the foreground and $-1/2$ for the background, as illustrated in Figure~\ref{toyfeats}.

\begin{figure}
  \centering
  \begin{subfigure}[t]{0.45\linewidth}    \centering\includegraphics[width=0.9\textwidth]{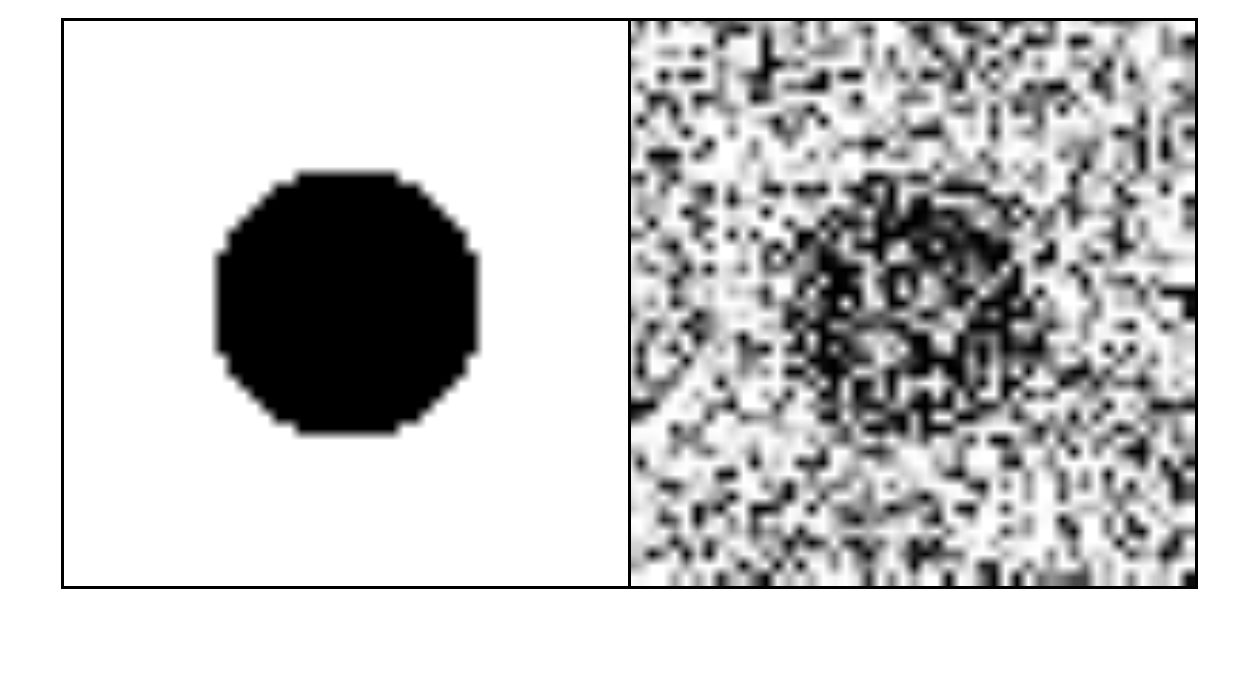}
    \caption{Sample label \& features\label{toyfeats}}
  \end{subfigure}\hfill
\begin{subfigure}[t]{0.55\linewidth}    \centering\includegraphics[width=1\textwidth]{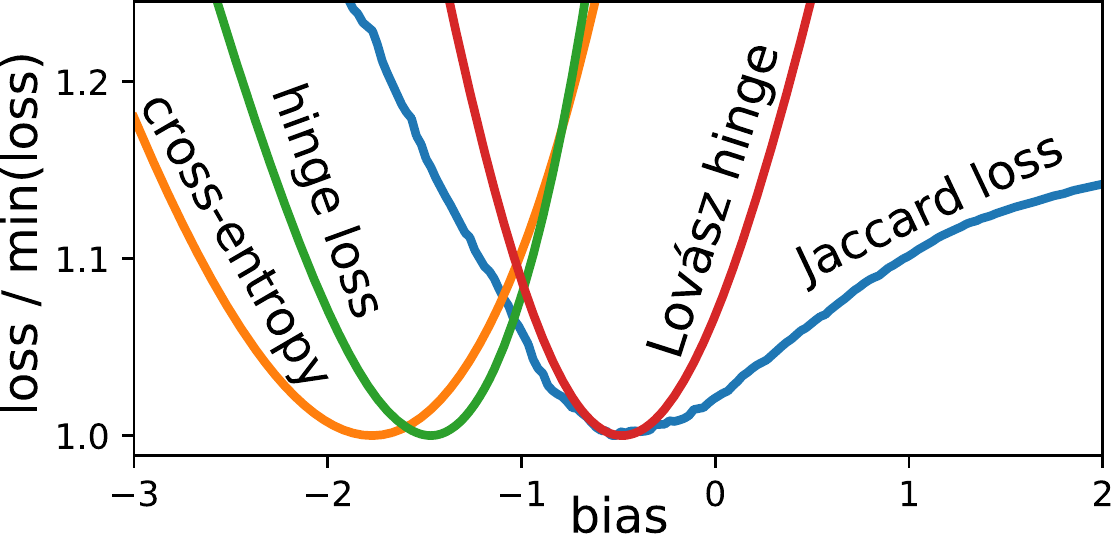}
    \caption{Relative losses for varying bias $b$\label{expbias}}
  \end{subfigure}
\caption{Synthetic model studied in \ref{sec:toy} and loss objectives.}
\end{figure}

We consider a model classifying pixels in the foreground class for $f_p > -b$, and we learn the bias term~$b$. An exhaustive search, illustrated in Figure~\ref{expbias}, shows that among the losses considered, only the Lovász hinge efficiently captures the absolute minimum of the Jaccard loss.

\subsection{Binary segmentation on Pascal VOC}
We base our Pascal VOC experiments on the DeeplabV2-single-scale semantic segmentation network~\cite{CP2016Deeplab}. The network uses a Resnet-101~\cite{he2016deep} based architecture, re-purposed for image segmentation, notably using dilated (or \emph{atrous}) convolutions. 
We use the initialization weights provided by the authors. These weights were pre-trained on MS-COCO~\cite{lin2014microsoft} using cross-entropy loss and weight decay. 
We further fine-tune these weights on a segmentation dataset consisting of Pascal VOC 2012 training images~\cite{pascal-voc-2012} and the extra images provided by~\cite{BharathICCV2011}, as is common in recent semantic image segmentation applications.

For our binary segmentation experiments, we perform an initial fine-tuning of the weights using cross-entropy loss alone jointly on the 21 classes of Pascal VOC (including the \texttt{background} class); this constitutes our basis network. We then turn to binary segmentation by selecting one particular class and finetune the output of the network for the selected class. 
In order to consider a realistic binary segmentation setting, for each class, we sample the validation set such that half of the images contain at least one foreground pixel. 
The training is done on random crops of size $321\times321$ extracted from the training set, with random scale and horizontal flipping. 
Training batches are randomly sampled from the training set such that half of the selected images contain the foreground class on average. 

Our experiments revolve around the choice of the training loss during fine-tuning to binary segmentation. We do a fine-tuning of $2$ epoch iterations, with an initial learning rate of $5 \cdot 10^{-4}$, reduced to $1 \cdot 10^{-4}$ after $1$ epoch.

\paragraph{Performance of the surrogate} Table~\ref{table:VOClossesSummary} shows the average of the losses considered after a training with different loss objectives. 
Evidently, training with a particular loss leads generally to a better objective value of this loss on the validation set.
Moreover, we see that the Lovász hinge acts as a good surrogate of the discrete image--IoU, leading to a better validation accuracy for this measure.

\begin{table}
\centering
\caption{Average of mean validation binary losses over the 20 Pascal VOC categories, after a training with cross-entropy, hinge, and Lovász hinge loss. The image--mIoU of the basis network, trained for all categories, is equal to 78.29.}
\label{table:VOClossesSummary}
\begin{tabular}{@{}lccc@{}}
\toprule
\emph{Training loss $\rightarrow$} & \multicolumn{1}{l}{Cross-entropy} & \multicolumn{1}{l}{Hinge} & \multicolumn{1}{l}{Lovász hinge} \\ \midrule
Cross-entropy & \textbf{6.84}                     & 6.96                      & 7.91                              \\
Hinge         & 7.81                              & \textbf{6.95}             & 7.11                              \\
Lovász hinge & 8.37                              & 7.45                      & \textbf{5.44}                     \\
Image--IoU   & 77.14                             & 75.8                      & \textbf{80.5}                     \\ \bottomrule
\end{tabular}
\end{table}

\begin{figure}
\centering
\includegraphics[width=\linewidth]{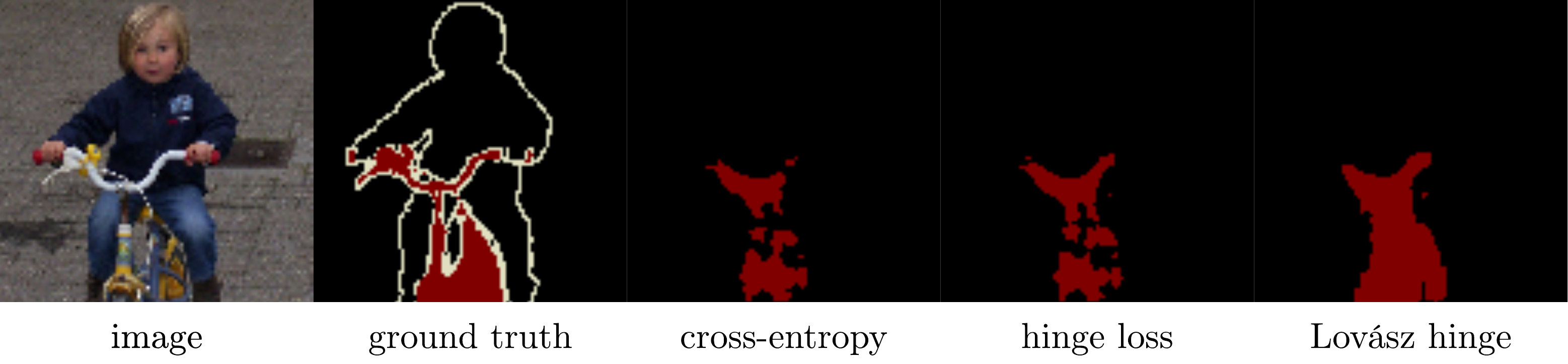}
\caption{Binary \texttt{bicycle} masks predicted on a validation image after training the network under various losses.\label{fig:binaryseg}}
\end{figure}
Figure~\ref{fig:binaryseg} shows example binary segmentation mask outputs. We notice that the Jaccard loss tends to fill gaps in segmentation, recover small objects, and lead to a more sensible segmentation globally, than other losses considered.

\paragraph{Comparison to prior work} \cite{rahman2016optimizing} propose separately approximating $I \simeq \sum_{i=1}^p F_i \, [y_i^* = 1]$ and $U \simeq \sum_{i=1}^n (p_i + [y_i^* = 1]) - I$ for optimization of binary $\text{IoU}\simeq I/U$. In our experiments, we were not able to observe a consistent improvement of the IoU using this surrogate, contrary to the Lovász hinge. Details on this comparison are included in the Supplementary Material, Section~A. 

\begin{figure*}[ht]
  \centering
  \begin{subfigure}[h]{0.195\linewidth}    \centering\includegraphics[height=7cm]{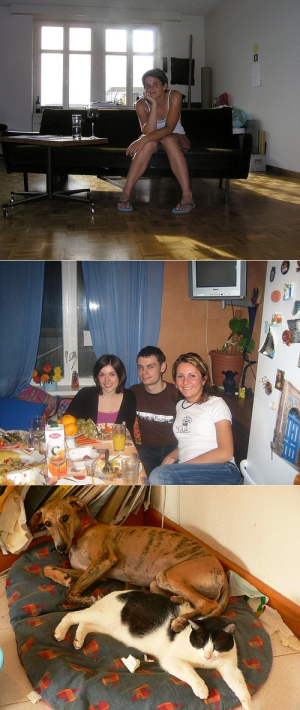}
    \caption{Input images}
  \end{subfigure}%
\begin{subfigure}[h]{0.195\linewidth} 
\centering\includegraphics[height=7cm]{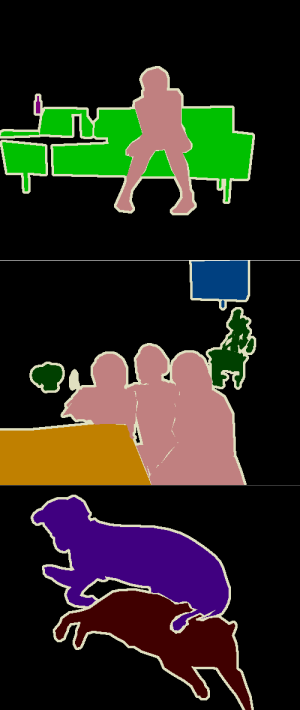}
    \caption{Ground truth masks}
  \end{subfigure}%
 \begin{subfigure}[h]{0.195\linewidth} \centering\includegraphics[height=7cm]{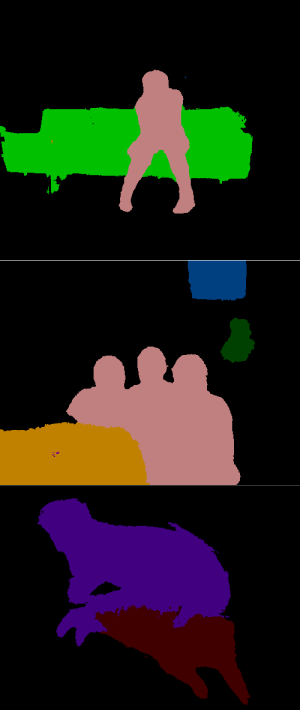}
    \caption{Lovász-Softmax + CRF}
  \end{subfigure}%
 \begin{subfigure}[h]{0.195\linewidth} \centering\includegraphics[height=7cm]{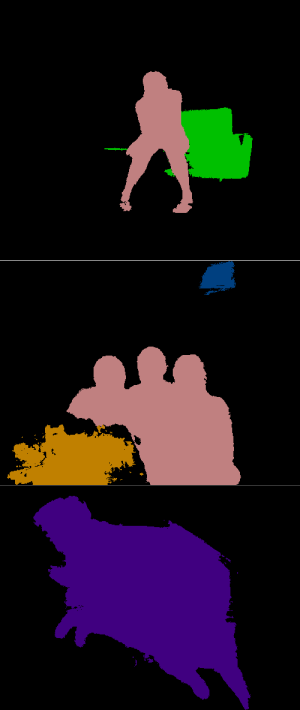}
    \caption{Cross-entropy + CRF}
  \end{subfigure}%
\caption{Multiclass segmentations after training with the Lovász-Softmax or the cross-entropy loss, and post-processed with Gaussian CRF. The color scheme follows the standard convention of the Pascal VOC dataset~\cite{pascal-voc-2012}.}\label{fig:ours-vs-xloss-samples}
\end{figure*}

\subsection{Multi-class segmentation on Pascal VOC}\label{sec:experimentsMultiClass}

We again useDeeplab-resnet-v2. This time, we exactly replicate the training procedure of the authors and following the same learning rate schedule, simply swapping out the loss for our multiclass surrogate, the Lovász-Softmax loss as described in Equation~\eqref{eq:softmax-lovasz-average}, with the mean being restricted to the classes present in a given batch.

As in the reference implementation, we use a stochastic gradient descent optimizer with momentum $0.9$ and weight decay $5\cdot 10^{-4}$; the learning rate at training iteration $k$ is
\begin{equation}\label{eq:lr-schedule}
\mathit{lr}^{(k)} = \mathit{lr}_\mathit{base}
\left(1 - \frac{k}{\mathit{max\_iter}}\right)^\mathit{power}
\end{equation}
where $\mathit{power}=0.9$ and $\mathit{lr}_\mathit{base} = 2.5 \cdot 10^{-4}$. 
We experiment either with $20\text{K}$ iterations of batches of size $10$ as in the reference paper, or with $30\text{K}$ iterations.
We train the network with patches of size $321 \times 321$, with random flipping and rescaling. 
The $1449$ validation images of Pascal VOC are included in the training only for experiments evaluated on the official test evaluation server. 

\begin{table*}[htb]
\centering
\caption{Performance of Deeplab-v2 single-scale trained with cross-entropy (x-loss) vs. Lovász-Softmax loss, for different network evaluations: raw single-scale network output, multi-scale, and Gaussian CRF post-processing.}
\label{results-multi}
\begin{tabular}{@{}lcccc@{}}
\toprule
                                  & \multicolumn{3}{c}{validation mIoU (\%)}          & test mIoU  (\%)                                                                               \\ \cmidrule(l){2-4} \cmidrule(l){5-5}
    & single-scale & multi-scale & multi-scale + CRF & multi-scale + CRF                                                                                 \\ \cmidrule(l){2-2} \cmidrule(l){3-3} \cmidrule(l){4-4} \cmidrule(l){5-5} 
x-loss                            & 74.64        & 76.23       & 76.53             & 76.44 
\\
x-loss + equibatch                & 75.53        & 76.70       & 77.31    & 78.05 
\\
x-loss + equibatch -- 30K iterations & 74.97        & 76.24       & 76.73             &                                                                                                   \\
\midrule
Lovász                            & 76.56        & 77.24       & 77.99             &                                                                                                   \\
Lovász + equibatch                & 76.53        & 77.28       & 78.49             &                                                                                                   \\
Lovász + equibatch -- 30K iterations & \textbf{77.41}        & \textbf{78.22}       & \textbf{79.12}    & \textbf{79.00} 
\\ \bottomrule
\end{tabular}
\end{table*}
\begin{table*}[htb]
\footnotesize
\centering
\caption{Per-class test IoU (\%) corresponding to the best-performing variants in Table~\ref{results-multi}.
}
\label{tab:perclass}
\resizebox{\textwidth}{!}{
\setlength\tabcolsep{1.5pt}
\begin{tabular}{@{}lcccccccccccccccccccc@{}}
\toprule
                            & airplane & cycle        & bird           & boat           & bottle         & bus            & car            & cat            & chair          & cow            & d. table   & dog            & horse          & mbike          & person         & plant          & sheep          & sofa           & train          & tv          \\ \midrule
x-loss                      & 92.95     & 41.06          & 87.06          & 61.23          & 77.6           & 91.99          & 88.11          & 92.45          & 32.84          & 82.48          & 59.6          & 90.13          & 89.83          & 86.77          & 85.79          & 58.06          & 85.31          & 52.00             & \textbf{84.47} & \textbf{71.26} \\
x-loss--equi.          & \textbf{93.32}     & 40.29          & \textbf{91.47} & 63.74          & 77.03          & 93.10           & \textbf{86.70}  & \textbf{93.37} & 34.79          & \textbf{87.92} & 69.74         & \textbf{89.53} & 90.61          & 84.70           & 85.13          & 59.23          & \textbf{87.71} & 64.46          & 82.89          & 68.57          \\
Lovász--equi 30K & 92.63     & \textbf{41.55} & 87.87          & \textbf{68.41} & \textbf{77.75} & \textbf{94.71} & \textbf{86.71} & 90.37          & \textbf{38.59} & 86.24          & \textbf{74.50} & 89.02          & \textbf{91.69} & \textbf{87.28} & \textbf{86.37} & \textbf{65.92} & 87.13          & \textbf{65.21} & 83.69          & 68.64          \\ \bottomrule
\end{tabular}
}
\end{table*}

We train Deeplab-resnet at a single input scale, which fits the memory constraints of a single GPU. 
We optionally evaluate the learned weights in a multiscale setting by taking the mean of the probabilities given by the network at scales $1$, $0.75$, and $0.5$, and also include the Gaussian CRF post-processing step used by Deeplab-v2. 
In this evaluation setting, we found that the baseline performance of the network trained with cross-entropy reaches $76.44$\% dataset--mIoU on the test set of Pascal VOC. 

Tables~\ref{results-multi} and~\ref{tab:perclass} present the scores obtained after training the network with cross-entropy or Lovász-Softmax loss, with and without \emph{equibatch}, under various evaluation regimes. For a given training and evaluation setting, our loss achieves higher mIoU. Figure~\ref{fig:ours-vs-xloss-samples} shows some example outputs.

Figure~\ref{fig:oursVSxlossIoU} shows the evolution of the validation mIoU over the course of the training. We notice that the performance gain manifests itself especially in the last epochs of the optimization.
Therefore, we also experiment with the same training setting with 30K iterations, to further benefit from the effects of the loss at these smaller learning rates. 
In agreement with our intuition, we see in Table~\ref{results-multi} that training with our surrogate benefits from a larger number of iterations, in contrast to the original training with cross-entropy.

\begin{figure*}[ht]
 \centering
  \begin{subfigure}[t]{0.32\linewidth}\centering
  \includegraphics[height=2.7cm]{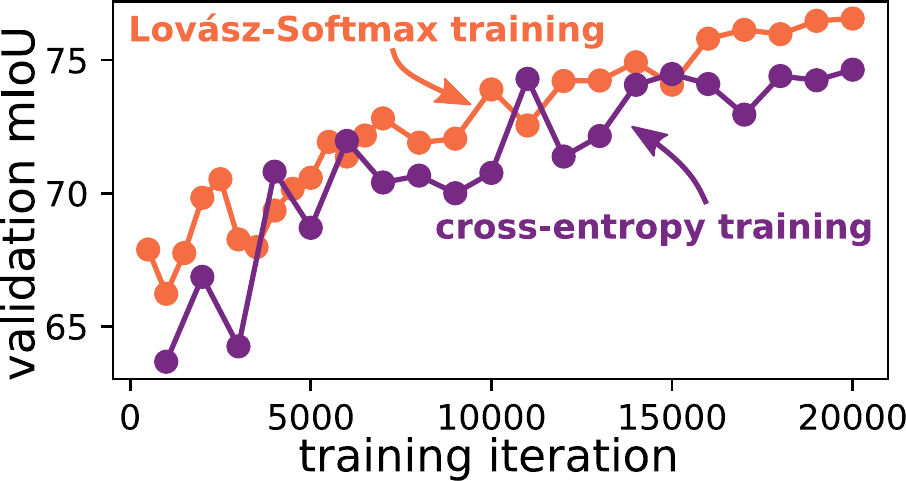}
\caption{Dataset mIoU on the validation set over the course of the Lovász-Softmax or cross-entropy optimization.\label{fig:oursVSxlossIoU}}
  \end{subfigure}
  \hfill
\begin{subfigure}[t]{0.32\linewidth}\centering
\includegraphics[height=2.7cm]{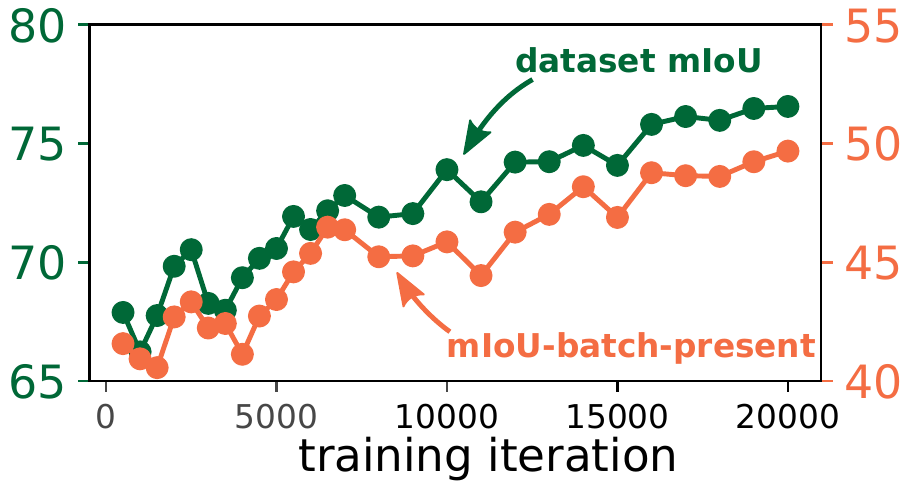}
\caption{Validation dataset--mIoU vs. batch--mIoU restricted to present classes during training with Lovász-Softmax.\label{fig:dataVSbatchIoU}}
  \end{subfigure}
  \hfill
 \begin{subfigure}[t]{0.32\linewidth}\centering
 \includegraphics[height=2.7cm]{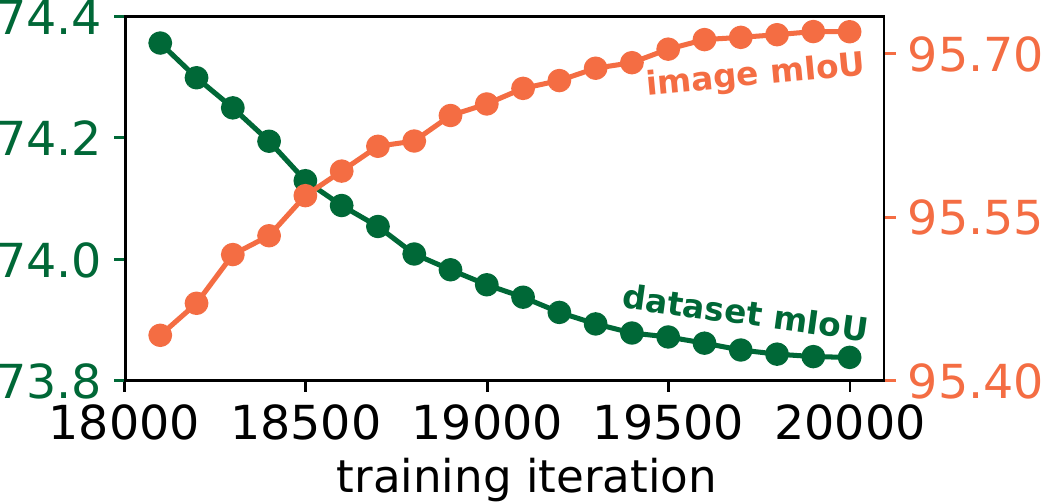}
    \caption{Validation dataset--mIoU and image--mIoU during training with Lovász-Softmax optimizing for image--mIoU.\label{fig:imageIoU}}
  \end{subfigure}
  \caption{Evolution of some validation measures over the course of the training.}
  \end{figure*}

The CRF post-processing step of Deeplab appears to bring complementary improvements to the use of our mIoU surrogate. While using \emph{equibatch} (batches with cyclic sampling from each class) does significantly help the cross-entropy loss with respect to the dataset--mIoU, its effect on the performance with Lovász-softmax seems marginal. This may be linked with the fact that our loss ignores classes absent from the minibatch ground truth, and therefore relies less on the order of appearance of the classes across batches. 
We found however that using \emph{equibatch} facilitates the convergence of the training, as it helps the network to consider all classes during the course of the optimization. This is especially important in the early stages of the optimization, where a class absent for too long can end up being dropped by the classifier in favor of the other classes.

\begin{figure}[ht]
  \centering
  \includegraphics[width=1.\linewidth]{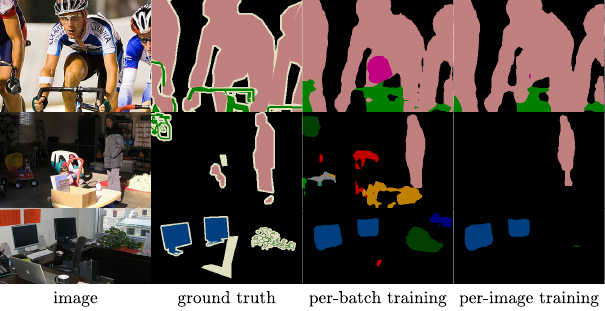}
\caption{Details of predicted masks after training with Lovász-Softmax per-batch vs. Lovász-Softmax per-image.}\label{fig:image-iou-samples}
\end{figure}
Figure~\ref{fig:dataVSbatchIoU} shows the joint evolution of the dataset--mIoU, and the batch--mIoU computed over present classes, during training. The correlation between these two measures justifies our choice of restricting the Lovász-Softmax to present classes as a proxy for optimizing the dataset--mIoU. 
As highlighted by Figure~\ref{fig:imageIoU}, the image--mIoU is a poor surrogate for the dataset--mIoU, as discussed in~Section~\ref{sec:differentMeasures}: optimizing one measure is generally detrimental to the other. 

Figure~\ref{fig:image-iou-samples} illustrates some qualitative differences between segmentations predicted by the network optimized for batch--mIoU and the network optimized for image--mIoU. The biggest difference between batch--mIoU and image--mIoU is the penalty associated with predicting a class that is absent from the ground truth. 
Accordingly, we notice that optimizing for image--mIoU tends to produce more sparse outputs, and output less extraneous classes, sometimes at the price of not including classes that are harder to detect. 

\paragraph{Comparison to prior work} 
Instead of changing the learning, Nowozin~\cite{Nowozin_2014_CVPR} designs a test-time decision function for mIoU based on the assumption of independent classifiers with calibrated probabilities. We applied this method on the Softmax output probabilities of the best model trained with cross-entropy loss (cross-entropy + \emph{equibatch}), and compare with the outputs from Lovász-Softmax (Lovász + equibatch $30\text{K}$). Since \cite{Nowozin_2014_CVPR} performs a local optimization (batches), we randomly select 20 batches of 21 images with every class represented, optimize the decision function, and compare the optimized mIoU of the batch with the mIoU of the selected batch in our output. The baseline has an average mIoU of $68.7 \pm 1.2$, our method significantly improves it to $72.5 \pm 1.2$, while~\cite{Nowozin_2014_CVPR} significantly degrades it to $65.1 \pm 1.4$. We believe this comes from the miscalibration of the neural network's probabilities, which adversely affects the assumptions of the decision function, as discussed in \cite[Sec. 5]{Nowozin_2014_CVPR}.

\subsection{Cityscapes segmentation with ENet}
We experiment with ENet, a segmentation architecture optimized for speed~\cite{paszke2016enet}, on the Cityscapes dataset~\cite{cordts2016cityscapes}. 
We fine-tune the weights provided by the authors, obtained after convergence of weighted cross-entropy loss, a loss that biases the cross-entropy loss to account for class inbalance in the training set. 
We do not need such a reweighing as our method inherently captures the class balancing of the mIoU. 

\begin{figure*}[ht]
  \centering
  \begin{subfigure}{0.33\linewidth}    \hfill\includegraphics[width=1\textwidth]{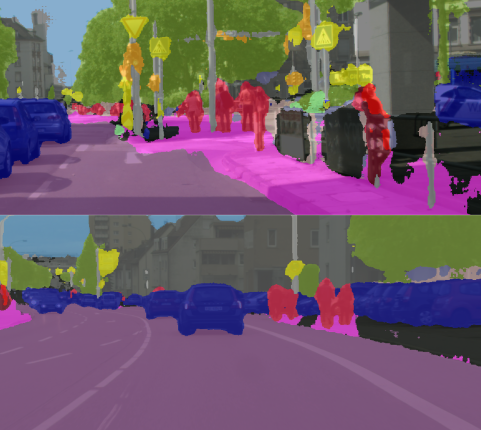}
    \caption{Initial ENet outputs~\cite{paszke2016enet}}
  \end{subfigure}
\begin{subfigure}{0.33\linewidth} 
\centering\includegraphics[width=1\textwidth]{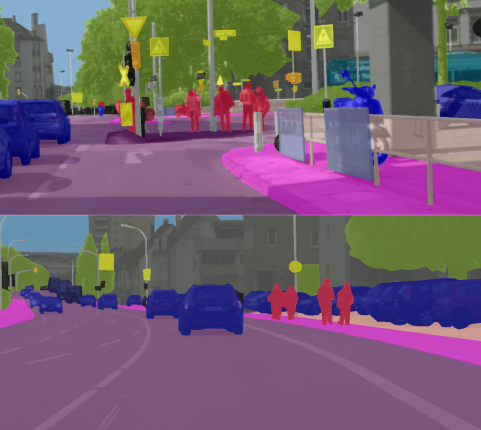}
    \caption{Ground truth masks}
  \end{subfigure}
 \begin{subfigure}{0.33\linewidth} \includegraphics[width=1\textwidth]{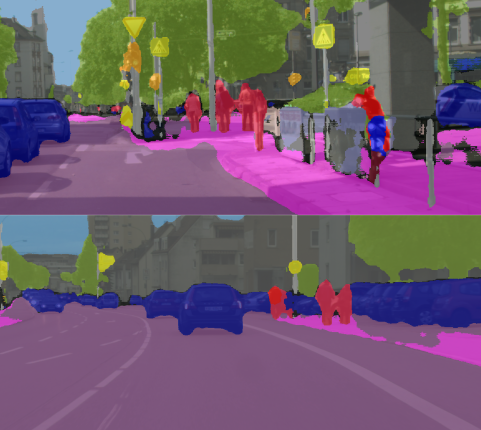}
    \caption{ENet + Lovász-Softmax fine-tuning}
  \end{subfigure}%
\caption{ENet: parts of output masks before and after fine-tuning with Lovász-Softmax (using the Cityscapes color palette).\label{fig:city-masks}}
\end{figure*}
\begin{figure}
\centering
\includegraphics[height=3cm]{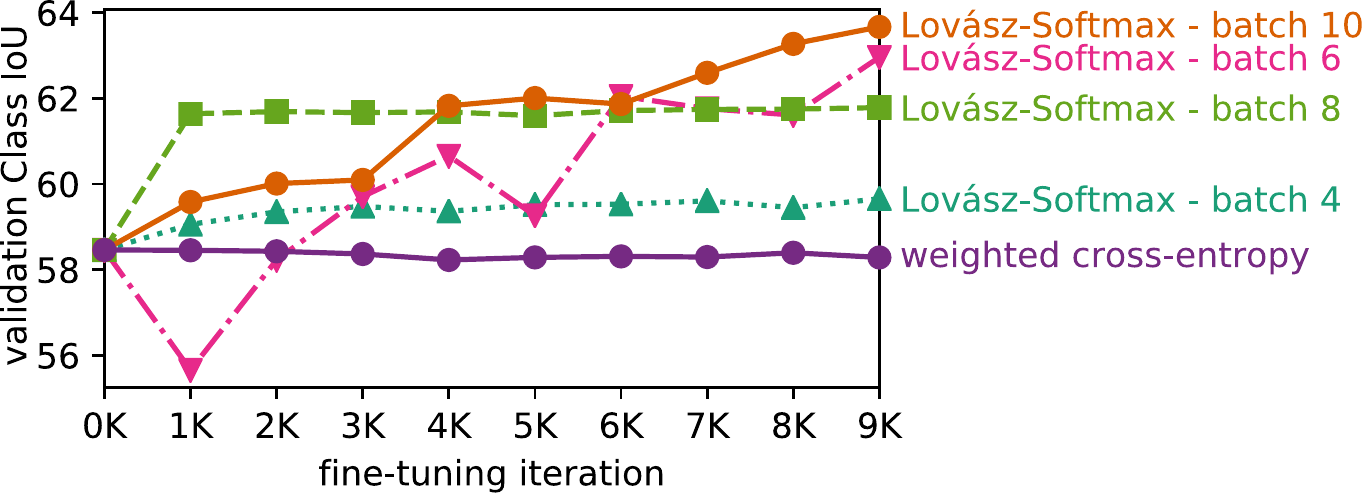}
\caption{Convergence of ENet on the validation set under fine-tuning with Lovász-Softmax, with various batch sizes.\label{fig:city-evo}}
\end{figure}
We finetune ENet using an Adam optimizer~\cite{kingma2014adam} with the same learning rate and schedule as in Equation~\eqref{eq:lr-schedule}. Consistent with \cite{paszke2016enet}, we use images of size $512\times 1024$ with no data augmentation. We snapshot every $1\text{K}$ iterations and report the test performance of snapshot $9\text{K}$ with batches of size 10, which corresponds to the highest validation score. 

Fig.~\ref{fig:city-evo} shows that our fine-tuning leads to a higher validation mIoU, while further training with weighted cross-entropy barely affects the performance -- as expected. Higher batch sizes generally lead to more improvement thanks to a better approximation of the dataset IoU. \emph{Equibatch} training did not make a difference in our experiments, which can be explained by the fact that the dataset is more uniform than Pascal VOC in terms of class representation. 
Note that we optimize for the mIoU measure, named \emph{Class IoU} in Cityscapes. Accordingly, we observe a substantial gain in performance in Cityscapes IoU metrics, with the Class IoU increasing from 58.29\% to 63.06\%. 
Reweighting the different classes in the average of the Lovász-Softmax loss (Equation~\eqref{eq:softmax-lovasz-average}) could allow us to target IoU-based measures which are weighted differently, such as CityScapes' iIoU metrics. 
Figure~\ref{fig:city-masks} presents some example output masks; we find that our fine-tuning generally reduces false positives and leads to finer details. 
Of course, our improved segmentation accuracy does not impact the high inference speed for which ENet is designed.

\begin{table}[ht]
\caption{Cityscapes results with Lovász-Softmax finetuning}
\label{tbl:city-enet}\vspace*{-2mm}
\centering\resizebox{.95\linewidth}{!}{%
\begin{tabular}{@{}lcccc@{}}
\toprule
               & Class IoU    & Class iIoU & Cat. IoU & Cat. iIoU \\ \midrule
ENet~\footnotemark       & 58.29          & 34.36        & 80.40          & \textbf{63.99}  %
\\
Finetuned~\footnotemark & \textbf{63.06} & 34.06        & \textbf{83.58} & 61.05 %
\\ \bottomrule
\end{tabular}
}\vspace*{-3mm}
\end{table}
\footnotetext[1]{\scriptsize\url{https://cityscapes-dataset.com/method-details/?submissionID=132}}
\footnotetext[2]{\scriptsize\url{https://cityscapes-dataset.com/method-details/?submissionID=993}}

\section{Discussion and Conclusions}\vspace*{-1mm}
In this work, we have demonstrated a versatile approach for optimizing the Jaccard loss for image segmentation. 
Our proposed method can be flexibly applied to a large number of function classes for segmentation, and we have demonstrated their effectiveness on state-of-the-art deep network architectures, substantially improving accuracies on semantic segmentation datasets simply by optimizing the correct loss during training. 
Qualitatively, we see greatly improved segmentation quality, in particular on small objects, while large objects tend to have consistent but smaller improvement in accuracy. 

This work shows that submodular measures such as the Jaccard index can be readily optimized in a continuous optimization setting. 
Further work includes the application of the approach to different tasks and losses exhibiting submodularity, and a derivation of specialized optimization routines given the piecewise-linear nature of the Lovász extension. 

The code associated with this publication, with replication of the experiments and implementations of the Lovász-Softmax loss, is released on~\url{https://github.com/bermanmaxim/LovaszSoftmax}.

\textbf{Acknowledgements. } This work is partially funded by Internal Funds KU Leuven and FP7-MC-CIG 334380.
We acknowledge support from the Research Foundation - Flanders (FWO) through project number G0A2716N. The authors thank J. Yu, X. Jia and Y. Huang for valuable comments and discussions.

\FloatBarrier
{\footnotesize
\bibliographystyle{ieee}
\bibliography{bermanbib,suppbib}

\begin{thebibliography}{10}\itemsep=-1pt

\bibitem{Ahmed_2015_ICCV}
F.~Ahmed, D.~Tarlow, and D.~Batra.
\newblock Optimizing expected intersection-over-union with
  candidate-constrained {CRFs}.
\newblock In {\em The IEEE International Conference on Computer Vision (ICCV)},
  2015.

\bibitem{bach2013learning}
F.~Bach et~al.
\newblock Learning with submodular functions: A convex optimization
  perspective.
\newblock {\em Foundations and Trends{\textregistered} in Machine Learning},
  6(2-3):145--373, 2013.

\bibitem{Blaschko2008d}
M.~B. Blaschko and C.~H. Lampert.
\newblock Learning to localize objects with structured output regression.
\newblock In D.~Forsyth, P.~Torr, and A.~Zisserman, editors, {\em Computer
  Vision -- ECCV}, volume 5302 of {\em Lecture Notes in Computer Science},
  pages 2--15. Springer, 2008.

\bibitem{Bourdev2010}
L.~Bourdev, S.~Maji, T.~Brox, and J.~Malik.
\newblock Detecting people using mutually consistent poselet activations.
\newblock In K.~Daniilidis, P.~Maragos, and N.~Paragios, editors, {\em European
  Conference on Computer Vision, Part VI}, pages 168--181. Springer, 2010.

\bibitem{cordts2016cityscapes}
M.~Cordts, M.~Omran, S.~Ramos, T.~Rehfeld, M.~Enzweiler, R.~Benenson,
  U.~Franke, S.~Roth, and B.~Schiele.
\newblock The cityscapes dataset for semantic urban scene understanding.
\newblock In {\em Proceedings of the IEEE conference on computer vision and
  pattern recognition}, pages 3213--3223, 2016.

\bibitem{csurka2013good}
G.~Csurka, D.~Larlus, F.~Perronnin, and F.~Meylan.
\newblock What is a good evaluation measure for semantic segmentation?
\newblock In {\em Proceedings of the British Machine Vision Conference},
  volume~27, 2013.

\bibitem{everingham2010pascal}
M.~Everingham, L.~Van~Gool, C.~K. Williams, J.~Winn, and A.~Zisserman.
\newblock The {PASCAL} {V}isual {O}bject {C}lasses ({VOC}) {C}hallenge.
\newblock {\em International journal of computer vision}, 88(2):303--338, 2010.

\bibitem{pascal-voc-2012}
M.~Everingham, L.~Van~Gool, C.~K.~I. Williams, J.~Winn, and A.~Zisserman.
\newblock The {PASCAL} {V}isual {O}bject {C}lasses {C}hallenge 2012 {(VOC2012)}
  {R}esults.
\newblock http://host.robots.ox.ac.uk/pascal/VOC/voc2012/.

\bibitem{frerix2018proximal}
T.~Frerix, T.~Möllenhoff, M.~Moeller, and D.~Cremers.
\newblock Proximal backpropagation.
\newblock In {\em International Conference on Learning Representations}, 2018.

\bibitem{fujishige2005submodular}
S.~Fujishige.
\newblock {\em Submodular Functions and Optimization}.
\newblock Annals of Discrete Mathematics. Elsevier Science, 2005.

\bibitem{Goodfellow-et-al-2016}
I.~Goodfellow, Y.~Bengio, and A.~Courville.
\newblock {\em Deep Learning}.
\newblock MIT Press, 2016.

\bibitem{BharathICCV2011}
B.~Hariharan, P.~Arbelaez, L.~Bourdev, S.~Maji, and J.~Malik.
\newblock Semantic contours from inverse detectors.
\newblock In {\em International Conference on Computer Vision (ICCV)}, 2011.

\bibitem{Hariharan2014}
B.~Hariharan, P.~Arbel{\'a}ez, R.~Girshick, and J.~Malik.
\newblock Simultaneous detection and segmentation.
\newblock In D.~Fleet, T.~Pajdla, B.~Schiele, and T.~Tuytelaars, editors, {\em
  Computer Vision -- ECCV, Part VII}, pages 297--312, 2014.

\bibitem{he2016deep}
K.~He, X.~Zhang, S.~Ren, and J.~Sun.
\newblock Deep residual learning for image recognition.
\newblock In {\em Proceedings of the IEEE Conference on Computer Vision and
  Pattern Recognition}, pages 770--778, 2016.

\bibitem{Jaccard1901}
P.~Jaccard.
\newblock \'{E}tude comparative de la distribution florale dans une portion des
  {Alpes} et des {Jura}.
\newblock {\em Bulletin de la Soci\'{e}t\'{e} Vaudoise des Sciences
  Naturelles}, 37:547--579, 1901.

\bibitem{kingma2014adam}
D.~P. Kingma and J.~Ba.
\newblock Adam: A method for stochastic optimization.
\newblock In {\em Proceedings of the 3rd International Conference on Learning
  Representations (ICLR)}, 2014.

\bibitem{CP2016Deeplab}
C.~Liang-Chieh, G.~Papandreou, I.~Kokkinos, K.~Murphy, and A.~L. Yuille.
\newblock {DeepLab}: Semantic image segmentation with deep convolutional nets,
  atrous convolution, and fully connected {CRFs}.
\newblock {\em Transactions on Pattern Analysis and Machine Intelligence},
  2017.

\bibitem{lin2014microsoft}
T.-Y. Lin, M.~Maire, S.~Belongie, J.~Hays, P.~Perona, D.~Ramanan,
  P.~Doll{\'a}r, and C.~L. Zitnick.
\newblock Microsoft {COCO}: Common objects in context.
\newblock In {\em European Conference on Computer Vision}, pages 740--755.
  Springer, 2014.

\bibitem{Long_2015_CVPR}
J.~Long, E.~Shelhamer, and T.~Darrell.
\newblock Fully convolutional networks for semantic segmentation.
\newblock In {\em The IEEE Conference on Computer Vision and Pattern
  Recognition (CVPR)}, June 2015.

\bibitem{lovasz1983submodular}
L.~Lov{\'a}sz.
\newblock Submodular functions and convexity.
\newblock In {\em Mathematical Programming The State of the Art}, pages
  235--257. Springer, 1983.

\bibitem{Nowozin_2014_CVPR}
S.~Nowozin.
\newblock Optimal decisions from probabilistic models: The
  intersection-over-union case.
\newblock In {\em The IEEE Conference on Computer Vision and Pattern
  Recognition (CVPR)}, June 2014.

\bibitem{paszke2016enet}
A.~Paszke, A.~Chaurasia, S.~Kim, and E.~Culurciello.
\newblock {ENet}: A deep neural network architecture for real-time semantic
  segmentation.
\newblock {\em arXiv preprint arXiv:1606.02147}, 2016.

\bibitem{rahman2016optimizing}
M.~A. Rahman and Y.~Wang.
\newblock Optimizing intersection-over-union in deep neural networks for image
  segmentation.
\newblock In {\em International Symposium on Visual Computing}, pages 234--244.
  Springer, 2016.

\bibitem{rohlfing2012image}
T.~Rohlfing.
\newblock Image similarity and tissue overlaps as surrogates for image
  registration accuracy: Widely used but unreliable.
\newblock {\em IEEE Transactions on Medical Imaging}, 31(2):153--163, 2012.

\bibitem{DBLP:journals/corr/SermanetEZMFL13}
P.~Sermanet, D.~Eigen, X.~Zhang, M.~Mathieu, R.~Fergus, and Y.~LeCun.
\newblock Overfeat: Integrated recognition, localization and detection using
  convolutional networks.
\newblock {\em CoRR}, abs/1312.6229, 2013.

\bibitem{shakeri2016sub}
M.~Shakeri, S.~Tsogkas, E.~Ferrante, S.~Lippe, S.~Kadoury, N.~Paragios, and
  I.~Kokkinos.
\newblock Sub-cortical brain structure segmentation using {F-CNN}'s.
\newblock In {\em International Symposium on Biomedical Imaging}, pages
  269--272. IEEE, 2016.

\bibitem{Sutskever2013IIM}
I.~Sutskever, J.~Martens, G.~Dahl, and G.~Hinton.
\newblock On the importance of initialization and momentum in deep learning.
\newblock In {\em Proceedings of the 30th International Conference on
  International Conference on Machine Learning}, pages III--1139--1147, 2013.

\bibitem{vapnik1998statistical}
V.~Vapnik.
\newblock {\em Statistical learning theory. 1998}.
\newblock Wiley, New York, 1998.

\bibitem{Vapnik1995}
V.~N. Vapnik.
\newblock {\em The Nature of Statistical Learning Theory}.
\newblock Springer, 1995.

\bibitem{yu:hal-01151823}
J.~Yu and M.~B. Blaschko.
\newblock Learning submodular losses with the {Lov{\'a}sz} hinge.
\newblock In {\em Proceedings of the 32nd International Conference on Machine
  Learning}, volume~37 of {\em Journal of Machine Learning Research: W\&CP},
  pages 1623–--1631, Lille, France, 2015.

\bibitem{Yu2015b}
J.~Yu and M.~B. Blaschko.
\newblock The {Lov{\'a}sz} hinge: A convex surrogate for submodular losses.
\newblock 2015.
\newblock {arXiv:1512.07797}.

\end{thebibliography}
}

\clearpage
\counterwithin{figure}{section}
\counterwithin{table}{section}
\counterwithin{algorithm}{section}
\counterwithin{equation}{section}
\counterwithin{definition}{section}
\counterwithin{theorem}{section}
\counterwithin{proposition}{section}

\makeatletter
\renewcommand{\paragraph}{%
  \@startsection{paragraph}{4}%
  {\z@}{3.25ex \@plus 1ex \@minus .2ex}{-1em}%
  {\normalfont\normalsize\bfseries}%
}
\makeatother

\rhead{Supplementary Material}
\graphicspath{{supp-fig/}}
\pagenumbering{Roman}

\title{The Lovász-Softmax loss: A tractable surrogate for the optimization of the intersection-over-union measure in neural networks\\ \vspace{0.5cm}
\large Supplementary Material}

\author{Maxim Berman \quad Amal Rannen Triki \quad Matthew B. Blaschko\\
	Dept.\ ESAT, Center for Processing Speech and Images\\
 	KU Leuven, Belgium\\
{\tt\small \{maxim.berman,amal.rannen,matthew.blaschko\}@esat.kuleuven.be} \\
}

\maketitle

\appendix

\section{Detailed results for Section 4.2: binary segmentation on Pascal VOC}

\begin{figure*}[ht]
\centering
\includegraphics[width=0.95\textwidth]{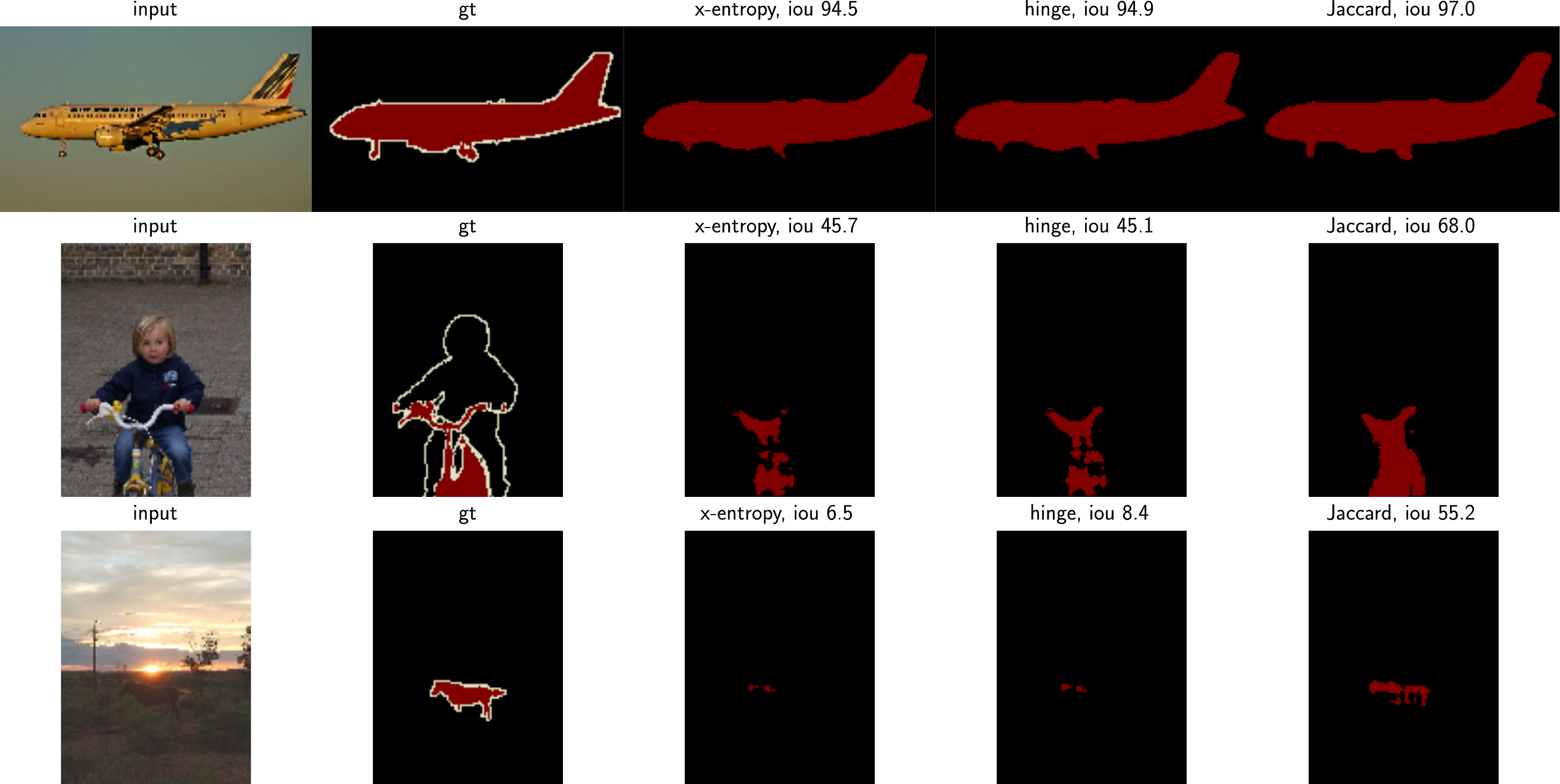}
\caption{Example binary segmentations trained with different losses and associated IoU scores on Pascal VOC.\label{fig:segmentations}}
\end{figure*}
Figure~\ref{fig:segmentations} shows segmentations obtained for binary foreground-background segmentation on Pascal VOC under different training losses, after finetuning a base multi-class classification network for a specific class. 
 We see that the Lovász hinge for the Jaccard loss tends to fill gaps in segmentation, recover small objects, and lead to a more sensible segmentation globally.

\begin{table*}[!ht]
\centering
\caption{Losses measured on our validation set of the 20 Pascal VOC categories, after a training with cross-entropy loss (\textbf{x}), hinge-loss (\textbf{h}), and Lovász-hinge (\textbf{j}). \textbf{b} indicates the performance of the base network, trained for all categories.}
\label{table:VOClosses}
\begin{flushright}
\resizebox{1.\textwidth}{!}{%
\begin{tabular}{@{}lcccccccccccccccc@{}}
 \toprule 
 & \multicolumn{4}{c}{aeroplane} & \multicolumn{4}{c}{bicycle} & \multicolumn{4}{c}{bird} & \multicolumn{4}{c}{boat}\\
\cmidrule(l){2-5}\cmidrule(l){6-9}\cmidrule(l){10-13}\cmidrule(l){14-17}
\multicolumn{1}{r}{\textit{training}} &\textbf{b} & \textbf{x} & \textbf{h} & \textbf{j} & \textbf{b} & \textbf{x} & \textbf{h} & \textbf{j} & \textbf{b} & \textbf{x} & \textbf{h} & \textbf{j} & \textbf{b} & \textbf{x} & \textbf{h} & \textbf{j}\\ \midrule
x-entropy, $\cdot 10^{-2}$ & & \textbf{2.8} & 3.3 & 4.1 &  & 12.3 & \textbf{11.0} & 11.3 &  & \textbf{3.4} & 4.0 & 4.6 &  & \textbf{6.4} & \textbf{6.4} & 6.9\\
hinge, $\cdot 10^{-2}$ & & 2.9 & \textbf{2.6} & 2.8 &  & 14.8 & 12.1 & \textbf{11.5} &  & 3.6 & 3.3 & \textbf{3.1} &  & 7.4 & \textbf{6.6} & \textbf{6.6}\\
Jacc-Hinge, $\cdot 10^{-1}$ & & 3.8 & 3.6 & \textbf{2.8} &  & 13.8 & 12.0 & \textbf{9.2} &  & 6.2 & 5.8 & \textbf{4.1} &  & 7.4 & 7.4 & \textbf{5.2}\\
Image-IoU, \% &86.2 & 88.6 & 87.7 & \textbf{89.6} & 63.2 & 61.2 & 58.7 & \textbf{66.3} & 84.5 & 82.1 & 81.3 & \textbf{86.9} & \textbf{80.3} & 75.8 & 73.2 & 79.9\\
\end{tabular}
} 
\resizebox{1.\textwidth}{!}{%
\begin{tabular}{@{}lcccccccccccccccc@{}}
 \toprule 
 & \multicolumn{4}{c}{bottle} & \multicolumn{4}{c}{bus} & \multicolumn{4}{c}{car} & \multicolumn{4}{c}{cat}\\
\cmidrule(l){2-5}\cmidrule(l){6-9}\cmidrule(l){10-13}\cmidrule(l){14-17}
\multicolumn{1}{r}{\textit{training}} &\textbf{b} & \textbf{x} & \textbf{h} & \textbf{j} & \textbf{b} & \textbf{x} & \textbf{h} & \textbf{j} & \textbf{b} & \textbf{x} & \textbf{h} & \textbf{j} & \textbf{b} & \textbf{x} & \textbf{h} & \textbf{j}\\ \midrule
x-entropy, $\cdot 10^{-2}$ & & \textbf{5.8} & \textbf{5.9} & 7.3 &  & \textbf{3.7} & 4.3 & 5.1 &  & \textbf{4.0} & 4.4 & 5.6 &  & \textbf{4.9} & 5.2 & 5.9\\
hinge, $\cdot 10^{-2}$ & & 6.6 & 5.6 & \textbf{4.5} &  & 3.9 & \textbf{3.4} & 3.9 &  & 4.4 & 4.0 & \textbf{3.5} &  & 5.4 & \textbf{4.9} & 5.1\\
Jacc-Hinge, $\cdot 10^{-1}$ & & 14.8 & 11.8 & \textbf{8.0} &  & 3.6 & 3.1 & \textbf{2.4} &  & 9.8 & 8.9 & \textbf{5.4} &  & 4.8 & 4.4 & \textbf{3.3}\\
Image-IoU, \% &\textbf{71.9} & 70.1 & 68.0 & 70.5 & 90.7 & 90.2 & 90.4 & \textbf{91.2} & 76.3 & 77.0 & 75.5 & \textbf{80.5} & 88.7 & 86.0 & 86.5 & \textbf{89.8}\\
\end{tabular}
} 
\resizebox{1.\textwidth}{!}{%
\begin{tabular}{@{}lcccccccccccccccc@{}}
 \toprule 
 & \multicolumn{4}{c}{chair} & \multicolumn{4}{c}{cow} & \multicolumn{4}{c}{diningtable} & \multicolumn{4}{c}{dog}\\
\cmidrule(l){2-5}\cmidrule(l){6-9}\cmidrule(l){10-13}\cmidrule(l){14-17}
\multicolumn{1}{r}{\textit{training}} &\textbf{b} & \textbf{x} & \textbf{h} & \textbf{j} & \textbf{b} & \textbf{x} & \textbf{h} & \textbf{j} & \textbf{b} & \textbf{x} & \textbf{h} & \textbf{j} & \textbf{b} & \textbf{x} & \textbf{h} & \textbf{j}\\ \midrule
x-entropy, $\cdot 10^{-2}$ & & 11.4 & \textbf{11.1} & 13.1 &  & \textbf{6.1} & 6.5 & 7.7 &  & 14.1 & \textbf{12.7} & 12.9 &  & \textbf{5.7} & 6.0 & 6.3\\
hinge, $\cdot 10^{-2}$ & & 13.3 & 11.8 & \textbf{11.0} &  & 6.9 & \textbf{6.2} & 7.6 &  & 16.7 & 14.5 & \textbf{13.7} &  & 6.3 & \textbf{5.8} & \textbf{5.8}\\
Jacc-Hinge, $\cdot 10^{-1}$ & & 16.6 & 14.4 & \textbf{9.8} &  & 5.6 & 5.1 & \textbf{4.1} &  & 12.5 & 10.7 & \textbf{7.9} &  & 5.6 & 5.0 & \textbf{3.4}\\
Image-IoU, \% &59.3 & 54.0 & 51.2 & \textbf{59.6} & 83.4 & 84.0 & 82.6 & \textbf{86.3} & 66.7 & 70.6 & 70.0 & \textbf{73.8} & 83.8 & 82.1 & 81.7 & \textbf{87.6}\\
\end{tabular}
} 
\resizebox{1.\textwidth}{!}{%
\begin{tabular}{@{}lcccccccccccccccc@{}}
 \toprule 
 & \multicolumn{4}{c}{horse} & \multicolumn{4}{c}{motorbike} & \multicolumn{4}{c}{person} & \multicolumn{4}{c}{potted-plant}\\
\cmidrule(l){2-5}\cmidrule(l){6-9}\cmidrule(l){10-13}\cmidrule(l){14-17}
\multicolumn{1}{r}{\textit{training}} &\textbf{b} & \textbf{x} & \textbf{h} & \textbf{j} & \textbf{b} & \textbf{x} & \textbf{h} & \textbf{j} & \textbf{b} & \textbf{x} & \textbf{h} & \textbf{j} & \textbf{b} & \textbf{x} & \textbf{h} & \textbf{j}\\ \midrule
x-entropy, $\cdot 10^{-2}$ & & \textbf{5.2} & 6.2 & 6.5 &  & \textbf{6.2} & 6.6 & 7.2 &  & \textbf{5.8} & \textbf{5.9} & 8.1 &  & \textbf{6.1} & 6.5 & 7.9\\
hinge, $\cdot 10^{-2}$ & & 5.7 & \textbf{5.3} & 5.8 &  & 7.0 & \textbf{6.4} & 6.8 &  & 6.5 & 6.0 & \textbf{5.4} &  & 6.9 & \textbf{6.1} & \textbf{6.1}\\
Jacc-Hinge, $\cdot 10^{-1}$ & & 6.0 & 5.7 & \textbf{4.6} &  & 5.1 & 4.8 & \textbf{3.7} &  & 8.1 & 7.4 & \textbf{4.9} &  & 12.4 & 10.4 & \textbf{8.2}\\
Image-IoU, \% &82.4 & 82.1 & 79.1 & \textbf{84.8} & 83.8 & 82.6 & 82.8 & \textbf{85.4} & 78.2 & 79.1 & 77.1 & \textbf{82.0} & 66.1 & 65.6 & 65.3 & \textbf{68.0}\\
\end{tabular}
} 
\resizebox{1.\textwidth}{!}{%
\begin{tabular}{@{}lcccccccccccccccc@{}}
 \toprule 
 & \multicolumn{4}{c}{sheep} & \multicolumn{4}{c}{sofa} & \multicolumn{4}{c}{train} & \multicolumn{4}{c}{tvmonitor}\\
\cmidrule(l){2-5}\cmidrule(l){6-9}\cmidrule(l){10-13}\cmidrule(l){14-17}
\multicolumn{1}{r}{\textit{training}} &\textbf{b} & \textbf{x} & \textbf{h} & \textbf{j} & \textbf{b} & \textbf{x} & \textbf{h} & \textbf{j} & \textbf{b} & \textbf{x} & \textbf{h} & \textbf{j} & \textbf{b} & \textbf{x} & \textbf{h} & \textbf{j}\\ \midrule
x-entropy, $\cdot 10^{-2}$ & & \textbf{6.4} & \textbf{6.5} & 7.8 &  & 13.8 & \textbf{13.4} & 14.9 &  & \textbf{7.0} & 7.2 & 8.8 &  & \textbf{5.6} & 6.0 & 6.2\\
hinge, $\cdot 10^{-2}$ & & 7.2 & \textbf{6.4} & 7.9 &  & 16.4 & \textbf{15.2} & 17.2 &  & 7.9 & \textbf{7.3} & 9.2 &  & 6.3 & 5.5 & \textbf{4.7}\\
Jacc-Hinge, $\cdot 10^{-1}$ & & 6.3 & 5.8 & \textbf{4.6} &  & 10.5 & 9.9 & \textbf{8.2} &  & 5.2 & 5.2 & \textbf{3.0} &  & 9.3 & 7.6 & \textbf{5.9}\\
Image-IoU, \% &83.7 & 80.3 & 78.1 & \textbf{85.3} & 69.7 & 69.6 & 67.7 & \textbf{72.1} & 88.8 & 83.9 & 81.3 & \textbf{89.7} & 78.1 & 77.8 & 77.8 & \textbf{80.6}\\
\bottomrule
\end{tabular}
} 
\end{flushright}

\end{table*}

\begin{figure}
  \centering
  \includegraphics[width=\columnwidth]{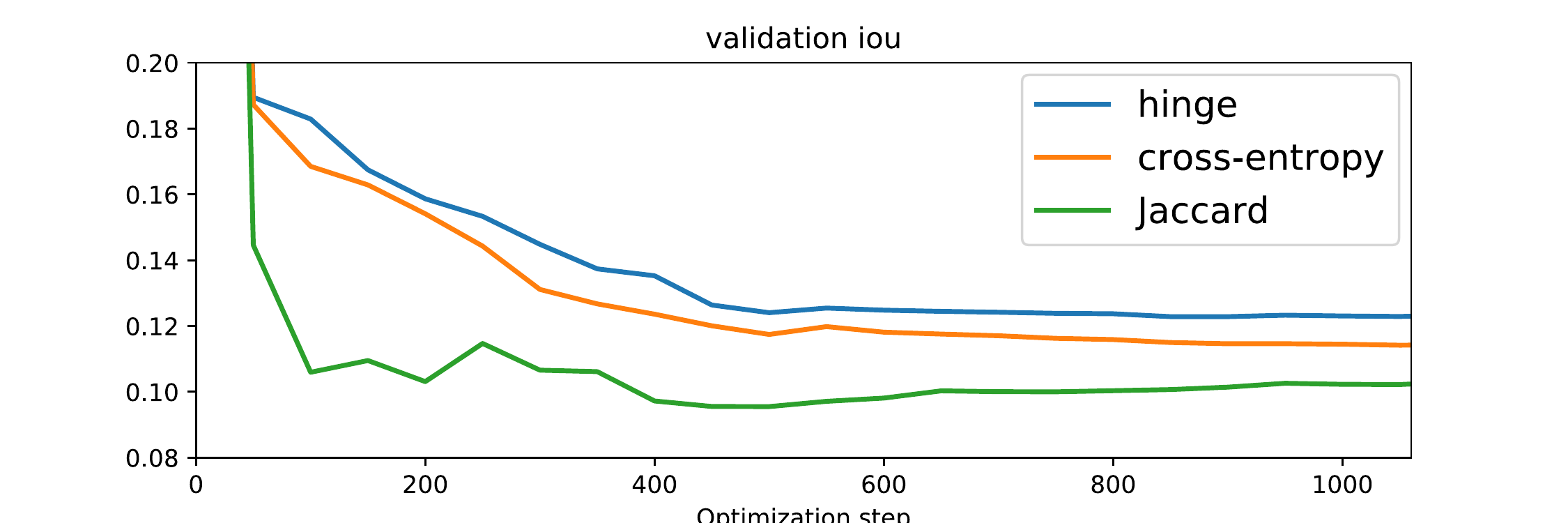}
\caption{Evolution of the validation IoU during the course of the optimization with the different losses considered.}\label{fig:IoUlossConvergenceLovaszCrossEntropyHinge}
\end{figure}

 Table~\ref{table:VOClosses} presents detailed scores for this binary segmentation task.
We notice a clear improvement of the per image-IoU by optimizing with the Jaccard loss. 
Moreover, the results are in agreement with the intuition that the best performance for a given loss on the validation set is achieved when training with that loss.
In some limited cases (\texttt{boat}, \texttt{bottle}) the performance of the base multi-class network is actually higher than the fine-tuned versions. 
Our understanding of this phenomenon is that the context is particularly important for these specific classes, and the absence of label for the other classes during finetuning impedes the predictive ability of the network. 
Additionally, Figure~\ref{fig:IoUlossConvergenceLovaszCrossEntropyHinge} presents an instance of convergence curves of this binary network, under the different losses considered. 

\paragraph{Comparison to prior work} \cite{rahman2016optimizing} propose separately approximating the intersection 
\begin{equation}
   I \simeq \sum_{i=1}^p F_i \, [y_i^* = 1],
\end{equation}
using the Iverson bracket notation, and the union
\begin{equation}
   U \simeq \sum_{i=1}^n (p_i + [y_i^* = 1]) - I
\end{equation}
for optimizing the binary $\text{IoU}\simeq I/U$. We compared the validation image mIoU under the loss of \cite{rahman2016optimizing} and the binary Lovász hinge, for all the categories of binarized Pascal VOC, in the setting of section 4.2. We chose for~\cite{rahman2016optimizing} the best-scoring among 3 learning rates.  As seen in Table~\ref{tab:compare_prior} the proxy loss in~\cite{rahman2016optimizing} does not reach the performance of our method. Since~\cite{rahman2016optimizing} uses the same approximation ``batch--IoU $\simeq$ dataset--IoU'', these observations extend to the binary dataset--IoU measure. 

\begin{table*}[htb]
\footnotesize
\centering
\caption{Per-class test IoU (\%) corresponding to the results by the best learning rate for~\cite{rahman2016optimizing} compared to the results of the  Lovász hinge.
}
\label{tab:compare_prior}
\resizebox{\textwidth}{!}{
\setlength\tabcolsep{1.5pt}
\begin{tabular}{@{}lcccccccccccccccccccc@{}}
\toprule
                            & airplane & cycle        & bird           & boat           & bottle         & bus            & car            & cat            & chair          & cow            & d. table   & dog            & horse          & mbike          & person         & plant          & sheep          & sofa           & train          & tv          \\ \midrule
\cite{rahman2016optimizing}                      &  79.9     &    54.7  &  75.5    &           72.5&   68.7   &   86.2   &    73.3       &   78.4   &     56.6      &   75.4        &  72.2   &  76.9 &   68.8   &  79.4    &   71.7   &    62.1  &    76.5      & 69.9 & 77.8 & 77.1 \\
Lovász-Hinge &   89.6   & 66.3 &   86.9& 79.9 & 70.5 & 91.2 & 80.5 &   89.8        & 59.6 &  86.3         & 73.8 &  87.6         & 84.8 & 85.4 & 82.0 & 68.0 &  85.3         & 72.1 & 89.7          &  80.6         \\ \bottomrule
\end{tabular}
}
\end{table*}

\section{Supplementary experiment: IBSR brain segmentation}

\paragraph{Data and Model} In order to test the Lovász-Softmax loss on a different type of images, we consider the publicly available dataset provided by the Internet Brain Segmentation Repository (IBSR)~\cite{rohlfing2012image}. This dataset is composed of Magnetic Resonance (MR) image data of 18 different patients annotated for segmentation. For this segmentation task, we used a model based on  Deeplab~\cite{CP2016Deeplab} adapted to IBSR by Shakeri et al.~\cite{shakeri2016sub}. Our evaluation follows the same procedure as in the cited paper: a subset of 8 subcortical structures is first selected: left and right thalamus, caudate, putamen, and pallidum, then 3 folds composed of respectively 11, 1, and 6 train, validation, and test volumes are used for training and testing. Table~\ref{tab:architecture}  details the model architecture to which we add batch normalization layers between the convolutional layers and their ReLU activations.

\paragraph{Settings} Similarly to~\cite{shakeri2016sub}, we consider the dataset composed of the 256 axial brain slices of each volume rather than using the 3D structure of the data. This dataset is composed of $256\times128$ grayscale images. Moreover, we discard the images that contain only the background class during training. For each fold, the training data is then limited to $\approx$ 800--900 slices. Training is done with stochastic gradient descent and a learning rate schedule to exponentially decrease from $10^{-1}$ to $10^{-3}$ over 35 epochs with either the cross-entropy loss as in the original model, or the Lovász-Softmax loss (the batch-mIoU for present classes variant). As we are interested on showing the effect of the loss, we do not apply the CRF post-processing proposed in ~\cite{shakeri2016sub}.

\paragraph{Results}
The mean Jaccard index and DICE over the 3 folds for each of the four classes (right + left) of interest along with the mean scores across all classes are given in Table~\ref{tab:BrainResults}, showing an improvement when using the Lovász-Softmax loss.  
Some qualitative results are shown in Figure~\ref{fig:BrainResults}, highlighting the improvements in detecting some fine subcortical structures when the Lovász-Softmax loss is used. 

\begin{table}[ht]
\centering
\resizebox{\linewidth}{!}{%
\begin{tabular}{@{}ccccccc@{}}
\toprule
Block & \multicolumn{3}{c}{convolution}     & \multicolumn{2}{c}{pooling} & batch norm. \\ \cmidrule(lr){2-4} \cmidrule(lr){5-6}
      & kernel      & \# filters & dilation & kernel          & stride    &         \\ \midrule
1     & $7\times 7$ & 64         & 1        & $3\times 3$     & 2         & yes         \\
2     & $5\times 5$ & 128        & 1        & $3\times 3$     & 2         & yes         \\
3     & $3\times 3$ & 256        & 2        & $3\times 3$     & 1         & yes         \\
4     & $3\times 3$ & 512        & 2        & $3\times 3$     & 1         & yes         \\
5     & $3\times 3$ & 512        & 2        & $3\times 3$     & 1         & yes         \\
6     & $4\times 4$ & 1024       & 4        & \multicolumn{2}{c}{none}    & yes         \\
7     & $1\times 1$ & 9          & 1        & \multicolumn{2}{c}{none}    & no          \\ \bottomrule
\end{tabular}
}
    \caption{Layers used for the brain image segmentation.}
    \label{tab:architecture}
\end{table}
\begin{table*}[ht]
\centering
\caption{Test results on IBSR brain segmentation task - Average on 3 folds }
\label{tab:BrainResults}
\begin{tabular}{@{}ccccccc@{}}
\cmidrule(l){3-7}
\multicolumn{1}{l}{}                                                      & \multicolumn{1}{l}{} & \begin{tabular}[c]{@{}c@{}}Thalamus\\ Proper\end{tabular} & Caudate & \multicolumn{1}{l}{Putamen} & Pallidum & Mean \\ \midrule
\multirow{2}{*}{\begin{tabular}[c]{@{}c@{}}Cross\\ Entropy\end{tabular}}  & Jaccard              & 72.74                                                      & 52.31    & 61.55                        & 54.04     & 60.16 \\ \cmidrule(l){2-7} 
                                                                          & DICE                 & 84.17                                                      & 68.33    & 76.07                        & 70.02     & 74.65 \\ \midrule
\multirow{2}{*}{\begin{tabular}[c]{@{}c@{}}Lovász\\ Softmax\end{tabular}} & Jaccard              & 73.56                                                      & 54.44    & 62.57                        & 55.74     & 61.55 \\ \cmidrule(l){2-7} 
                                                                          & DICE                 & 84.74                                                      & 70.25    & 76.89                        & 71.50     & 75.84 \\ \bottomrule
\end{tabular}
\end{table*}

\begin{figure*}[ht]
\centering
\includegraphics[width=0.7\textwidth]{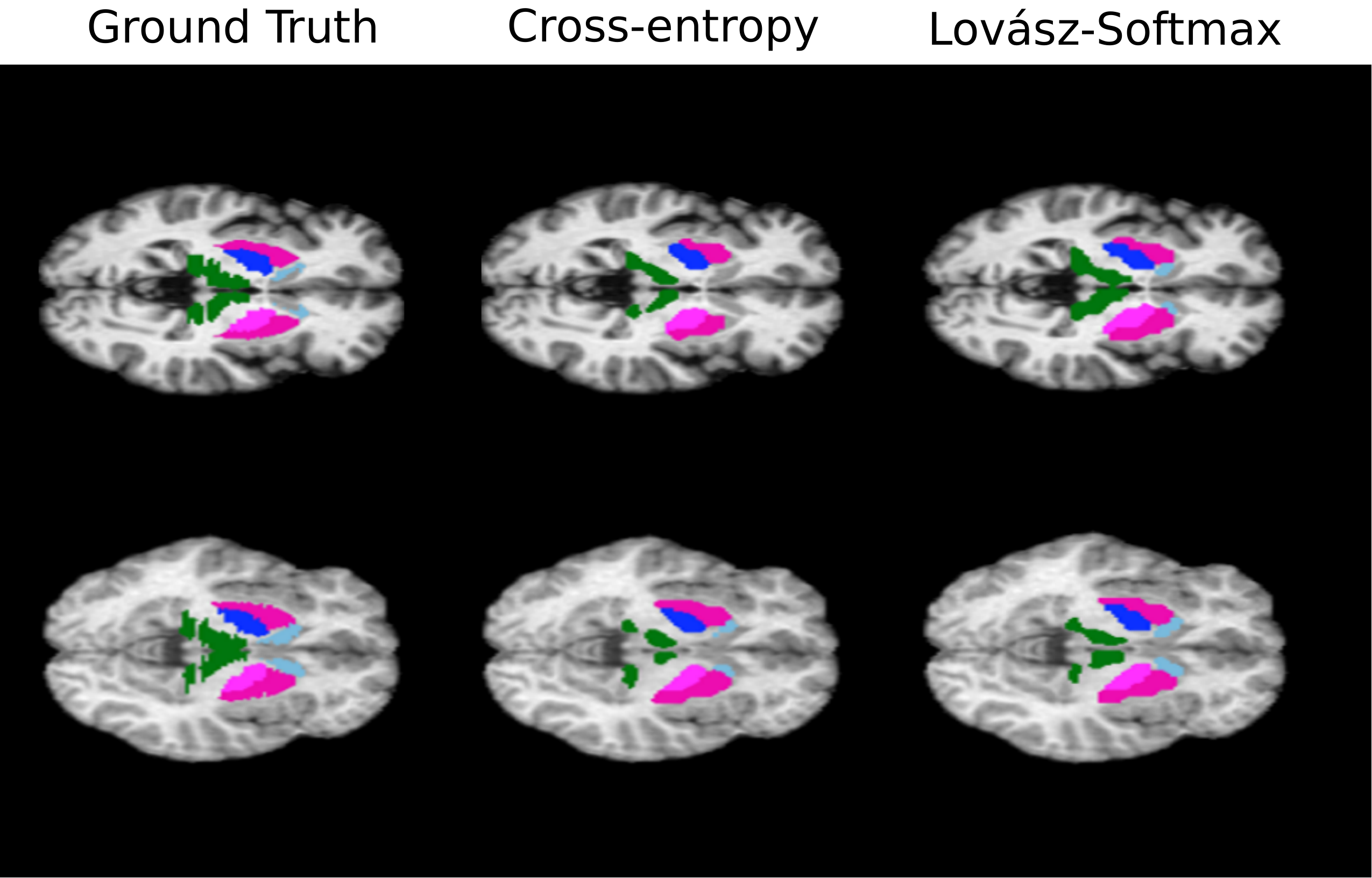}
\caption{Some examples of segmentation on the ISBR dataset. These examples are taken from two different patients and two different folds, and show an improvement in the segmentation of some fine structures when the Lovász-Softmax loss is used. } \label{fig:BrainResults}
\end{figure*}

\section{Proximal gradient algorithm}
We have developed a specialized optimization procedure for the Lovász Hinge for binary classification with the Jaccard loss, based on a computation of the proximal operator of the Lovász Hinge.
We include this algorithm here for completeness but have not used it for the main results of the paper, instead relying on standard stochastic gradient descent with momentum. 
The \emph{proximal gradient algorithm} we propose here has been independently proposed by Frerix et al.~\cite{frerix2018proximal}.

Our motivation for the proximal gradient algorithm stems from the piecewise-linearity of our loss function, which might destabilize stochastic gradient descent. Instead we would like to exploit the geometry of the Lovász Hinge. 
We therefore analyze the applicability of (variants of) the proximal gradient algorithm for optimization of a risk functional based on the Lov\'{a}sz hinge.
\begin{definition}[Proximal operator]
The proximal operator of a function $f$ with a regularization parameter $\lambda$ is 
\begin{equation}\label{prox_alg}
\prox_{f, \lambda}(x) = \argmin_{u} f(u) + \frac{\lambda}{2} \| u - x \|^2
\end{equation}
\end{definition}

We consider the problem of minimizing a (sub)differentiable function $f$. Iterative application of the proximal operator with an appropriately decreasing schedule of $\{\lambda_t\}_{0\leq t \leq \infty}$ leads to convergence to a local minimum analogously to gradient descent.  Furthermore, it is straightforward to show that, given an appropriately chosen schedule of $\lambda$ parameters, the proximal gradient algorithm will converge at least as fast as gradient descent.

\begin{proposition}
Given a gradient descent parameter $\eta$, $x_{t+1} = x_{t} - \eta \nabla f(x_t)$, there exists a set of descent parameters $\{\lambda_t\}_{0 \leq t \leq \infty}$ such that 
\begin{enumerate*}[label=(\roman*)]
\item the step size of the proximal operator is equivalent to gradient descent and 
\item $\operatorname{prox}_{f, \lambda_t}(x_t) \leq x_{t} - \eta \nabla f(x_t)$
\end{enumerate*}.
\end{proposition}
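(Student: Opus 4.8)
The plan is to exploit the variational definition of the proximal operator in Equation~\eqref{prox_alg} together with an intermediate-value argument in the parameter $\lambda$. Throughout I work with a convex (sub)differentiable $f$, so that $\prox_{f,\lambda}(x)$ is single-valued and the optimality condition reads $0 \in \partial f(u^\star) + \lambda(u^\star - x)$, i.e.\ the proximal point satisfies $u^\star = x - \tfrac{1}{\lambda}\,\xi$ for some subgradient $\xi \in \partial f(u^\star)$. I read claim (ii) as a statement about achieved objective values, $f(\prox_{f,\lambda_t}(x_t)) \le f(x_t - \eta\nabla f(x_t))$, which is the meaningful ``descent at least as good as gradient descent'' assertion that the surrounding text invokes to justify the ``at least as fast'' convergence.

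For (i), I would define the proximal step length $s(\lambda) := \|\prox_{f,\lambda}(x_t) - x_t\|$ and record two structural facts. The map $\prox_{f,\lambda}(x_t)$ is continuous in $\lambda$ (the proximal operator of a proper closed convex function is single-valued and firmly nonexpansive), so $s$ is continuous; and increasing $\lambda$ penalizes displacement from $x_t$ more heavily, so $s$ is monotonically non-increasing, with $s(\lambda)\to 0$ as $\lambda\to\infty$ and $s(\lambda)\to \operatorname{dist}(x_t,\argmin f)$ as $\lambda\to 0^+$ (the latter being unbounded when the infimum is attained only at infinity, as is typical for a hinge surrogate). Since the gradient step has length exactly $\eta\|\nabla f(x_t)\|$, the intermediate value theorem furnishes a $\lambda_t$ with $s(\lambda_t)=\eta\|\nabla f(x_t)\|$ whenever this target lies in the achievable range; this matches the two step lengths and establishes (i).

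For (ii), I fix this $\lambda_t$ and set $r := \eta\|\nabla f(x_t)\| = s(\lambda_t)$, so that the proximal point $u^\star := \prox_{f,\lambda_t}(x_t)$ lies on the sphere $\{u : \|u-x_t\| = r\}$. On that sphere the quadratic penalty $\tfrac{\lambda_t}{2}\|u-x_t\|^2 = \tfrac{\lambda_t}{2}r^2$ is constant, so the fact that $u^\star$ minimizes the proximal objective over the whole space and happens to land on the sphere forces $u^\star$ to minimize $f$ itself among all points at distance $r$ from $x_t$. The gradient-descent iterate $x_t-\eta\nabla f(x_t)$ is exactly such a point, hence $f(u^\star) \le f(x_t-\eta\nabla f(x_t))$, giving (ii); telescoping this per-step inequality over the schedule $\{\lambda_t\}$ yields the claimed comparison with gradient descent.

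The hard part will be handling the non-smoothness of the Lovász hinge: $\nabla f$ must be read as a subgradient, and I would need to verify that the structural facts about $s(\lambda)$ survive for a convex piecewise-linear $f$. Continuity still holds, and monotonicity of $s$ follows from a standard comparison of the optimality conditions at two values $\lambda$ and $\lambda'$; the piecewise-linear geometry only introduces flat segments in $s$, which do not impede the intermediate-value argument, though they prevent uniqueness of $\lambda_t$. The remaining care is the range condition: one must assume $\eta$ small enough that $\eta\|\nabla f(x_t)\| \le \sup_\lambda s(\lambda)$, i.e.\ that the gradient step does not overshoot beyond what any proximal step can reach; for the hinge this is innocuous since $\sup_\lambda s(\lambda)$ is typically unbounded.
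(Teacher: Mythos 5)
Your proposal takes essentially the same route as the paper: the paper's proof likewise identifies $\operatorname{prox}_{f,\lambda}$ with the Lagrangian of the ball-constrained problem $\min_u f(u)$ subject to $\|u-x_t\|^2 \leq R$, chooses $\lambda_t$ so that $R_t = \|\eta\nabla f(x_t)\|^2$, and obtains (ii) exactly as you do, by observing that the prox point minimizes $f$ over a ball (equivalently, a sphere on which the quadratic penalty is constant) that contains the gradient-descent iterate. Your intermediate-value argument for the existence of $\lambda_t$ --- continuity and monotonicity of the step length $s(\lambda)$, together with the range condition $\eta\|\nabla f(x_t)\| \leq \sup_{\lambda} s(\lambda)$ --- is a welcome tightening of the paper's one-line appeal to Lagrangian duality, which silently assumes the multiplier--radius correspondence is onto and omits the overshoot caveat you correctly flag.
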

\begin{proof}
Starting with claim (i), we note that the proximal operator is the Lagrangian of the constrained optimization problem $\arg\min_{u} f(u)$ s.t.\ $\|x - u \|^2 \leq R$ for some $R > 0$, and we may therefore consider $\lambda_{t}$ such that $R_t = \| \eta \nabla f(x_t) \|^2$, where $\{x_t\}_{0 \leq t \leq \infty}$ is the sequence of values visited in gradient descent.

Claim (ii) follows directly from the definition of the proximal operator as the minimization of $f(u)$ within a ball of radius $R_t$ around $x_t$ must be at least as small as the value at the gradient descent direction.
\end{proof}
It is straightforward to convert a gradient descent step size schedule to an equivalent proximal gradient schedule of $\lambda_t$ values such that, were the objective linear, the two algorithms would be equivalent.  Indeed, the proximal gradient algorithm applied to a piecewise linear objective only differs from gradient descent at the boundaries between linear pieces, in which case it converges in a strictly smaller number of steps than gradient descent.

We optimize a deep neural network architecture by a modified backpropagation algorithm in which the gradient direction with respect to the loss layer is given by the direction of the empirical difference $x_t - \operatorname{prox}_{f}(x_t)$.  We note that this modification to the standard gradient computation is compatible with popular optimization strategies such as Adam~\cite{kingma2014adam}.  In initial experiments using the true gradient rather than that based on the proximal operator, we found that the use of momentum led to faster empirical convergence than Adam, and we therefore have based our subsequent comparison and empirical results on optimization with momentum. 

We show here that these momentum terms still do not lead in practice to as efficient update directions as those defined by the proximal operator.

\begin{definition}[Momentum~\cite{Sutskever2013IIM}]
Gradient descent with momentum is achieved with the following update rules
\begin{align}
v_{t+1} =& \alpha v_t + \nabla f(x_t)\\
x_{t+1} =& x_t - \eta v_{t+1} ,
\end{align}
where $\eta$ is the gradient descent parameter and $\alpha \in [0,1]$ is the momentum coefficient.
\end{definition}

Unrolling this recursion shows that momentum gives an exponentially decaying weighted average of previous gradient values, and setting $\alpha=0$ recovers classical gradient descent. 

Figure~\ref{fig:ProxVsMomentum} shows the behavior of gradient descent with momentum on the problem 
\begin{equation}\label{eq:examplePiecewiseLinearOptimization}
\min_{x\in \mathbb{R}^2} \max\left(0,\left\langle x, \begin{pmatrix} \nu \\ 0 \end{pmatrix} \right\rangle, \left\langle x, \begin{pmatrix} 0\\ 1 \end{pmatrix} \right\rangle \right),
\end{equation}
where $\nu$ is a positive scalar that allows us to adjust the relative scale of the gradients on either side of the boundary between the pieces.
In all cases, the momentum oscillates around piecewise-linear edges, and in Figure~\ref{fig:ProxVsMomentumNu1_3}, we see that traversing to a piece of the loss surface with very different slope can lead to multiple steps away from the boundary before returning to a steeper descent direction.  By contrast, the proximal algorithm immediately determines the optimal descent direction.

\begin{figure}
 \centering
  \begin{subfigure}{0.49\linewidth}    \centering\includegraphics[width=1\textwidth]{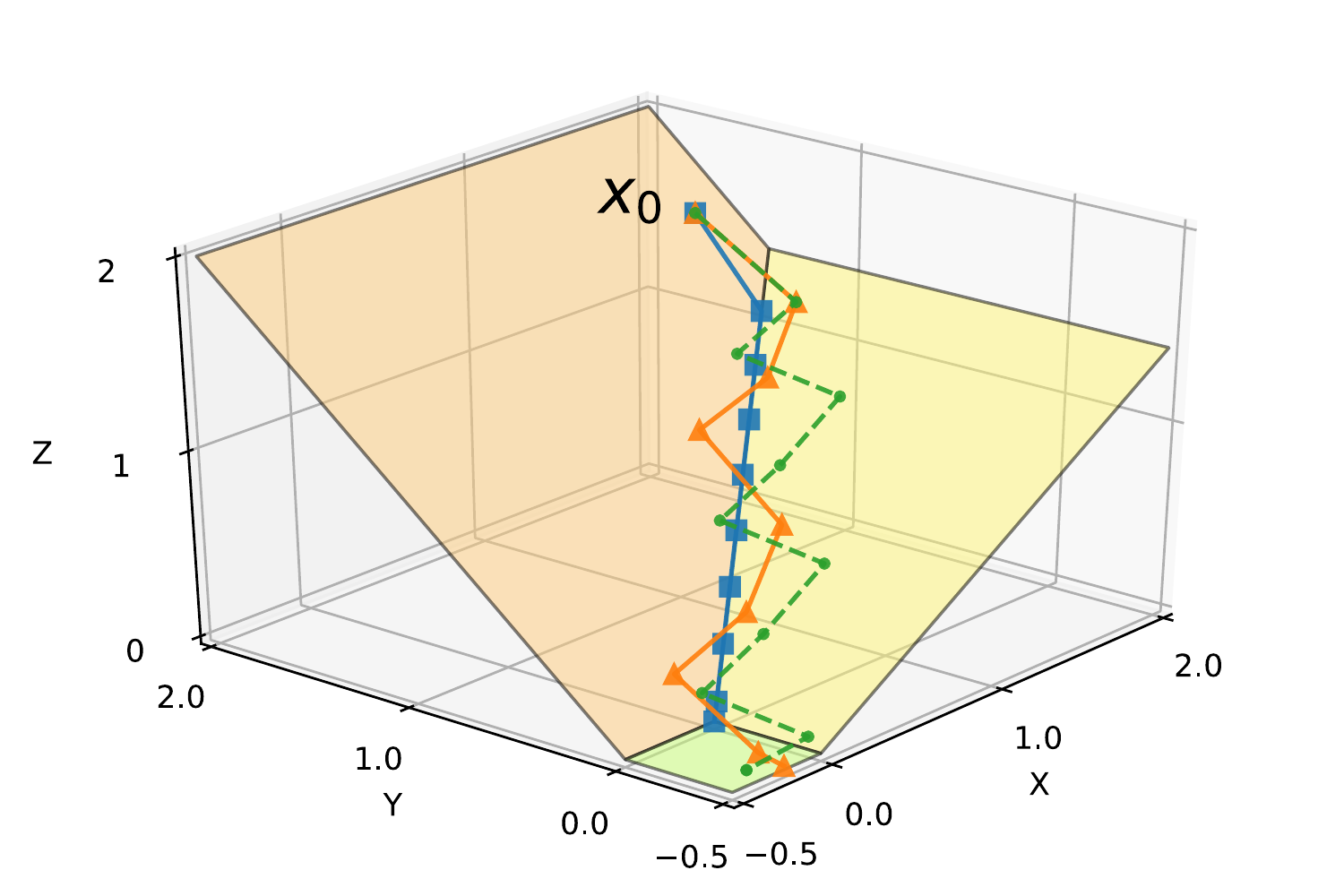}
    \caption{$\nu = 0.7$}
  \end{subfigure}
\begin{subfigure}{0.49\linewidth}    \centering\includegraphics[width=1\textwidth]{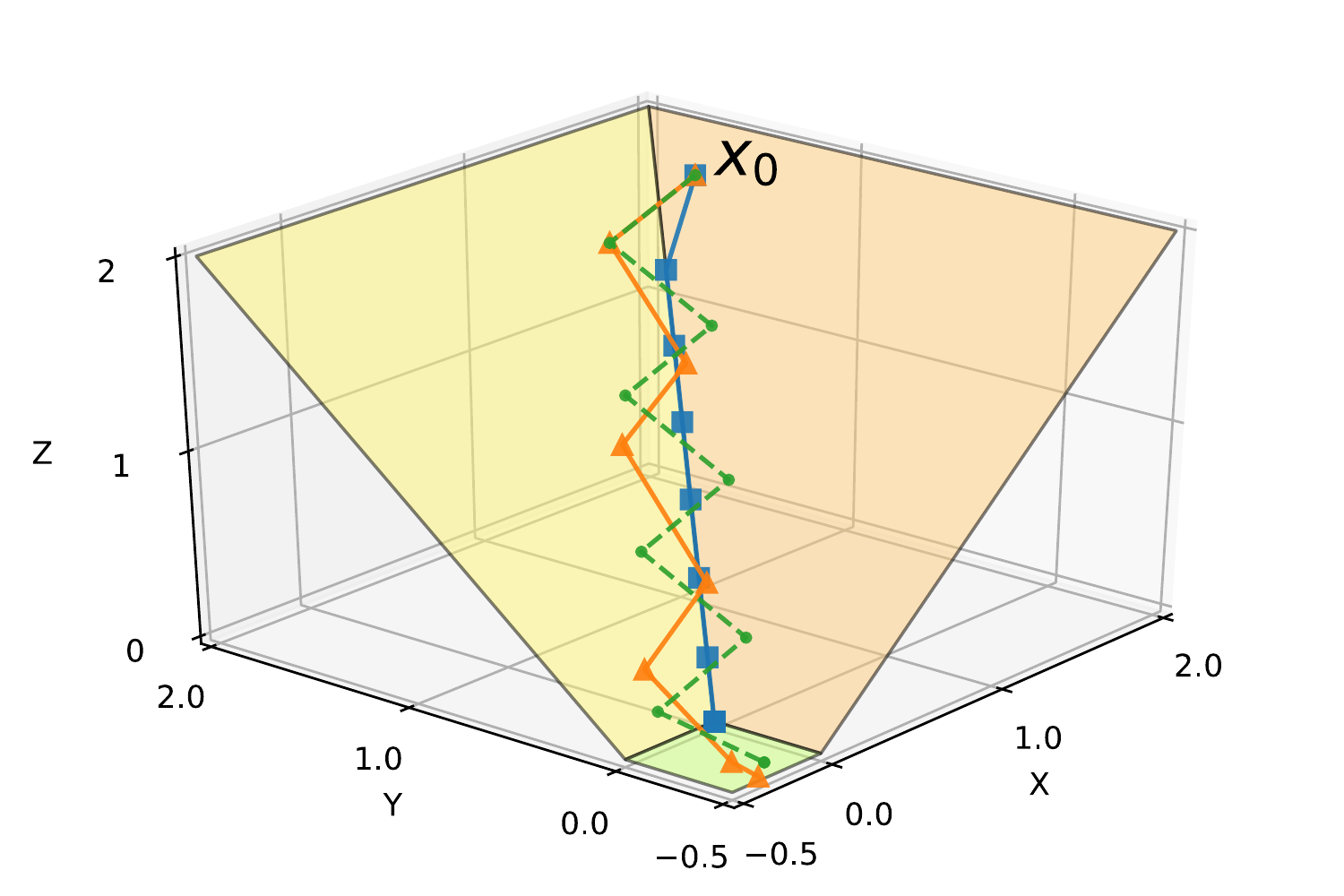}
    \caption{$\nu = 1.0$}
  \end{subfigure}
 \begin{subfigure}{0.49\linewidth}    \centering\includegraphics[width=1\textwidth]{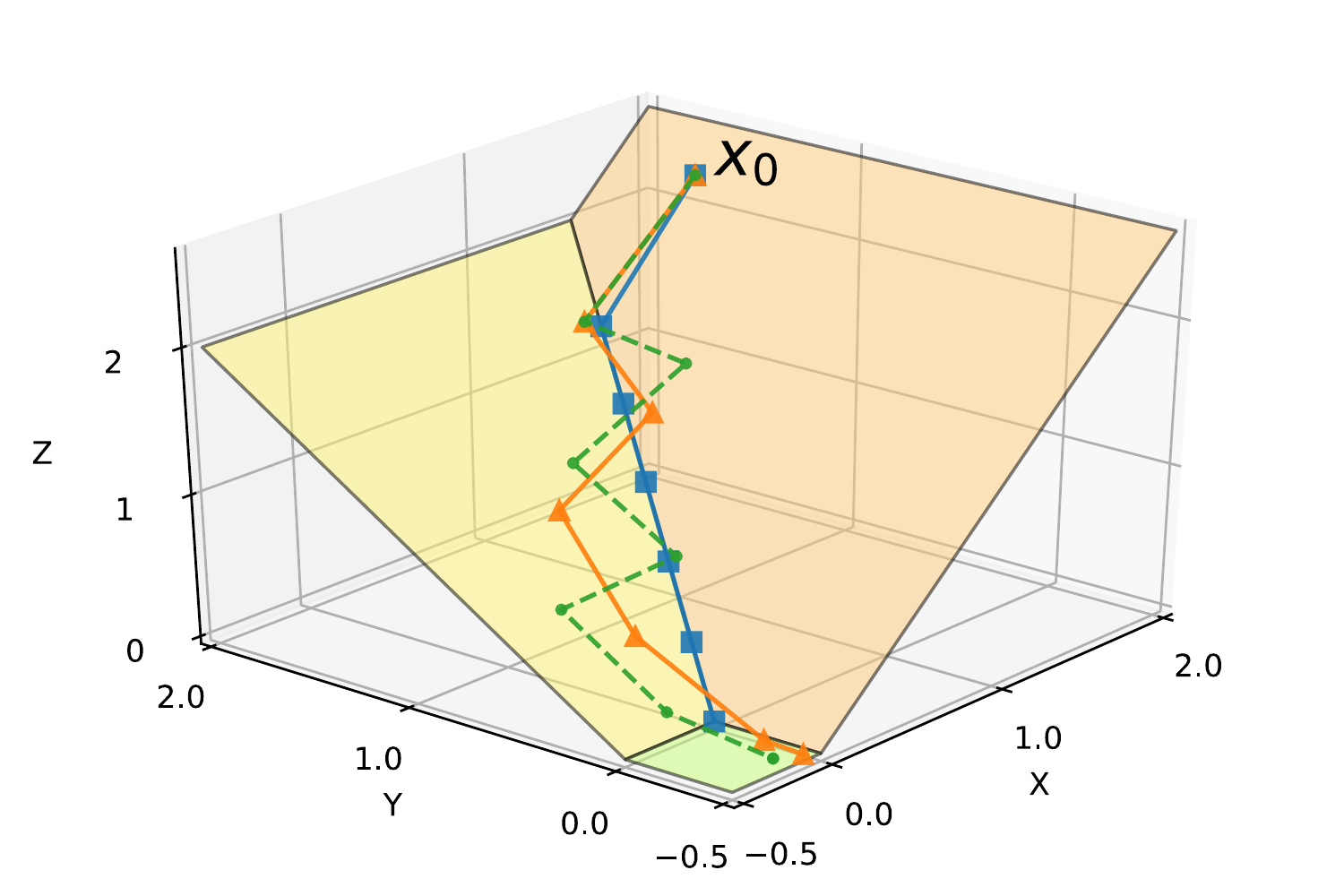}
    \caption{$\nu = 1.3$ \label{fig:ProxVsMomentumNu1_3}}
  \end{subfigure}
\caption{Optimization behavior of the piecewise-linear surface defined in Equation~\ref{eq:examplePiecewiseLinearOptimization}: gradient descent (green, dashed) and momentum (orange, plain) oscillate around the edge, while the proximal algorithm (green) finds the optimal descent direction.}\label{fig:ProxVsMomentum}
\end{figure}

\paragraph{Optimization study} We specialize the proximal gradient algorithm to our proposed Jaccard Hinge loss. We compute an approximate value of the proximal point to any initial point on the loss surface by following a greedy minimization path to the proximal objective~\ref{prox_alg}. This computation is detailed in Algorithm~\ref{alg:ProxLovaszJaccardSingle}.

\begin{algorithm}
\renewcommand{\algorithmicrequire}{\textbf{Input:}}
\renewcommand{\algorithmicensure}{\textbf{Output:}}
\caption{Computation of $\prox_{\ext{\Jf}, \lambda}(\margs)$}\label{alg:ProxLovaszJaccardSingle}
\begin{algorithmic}[1]
\REQUIRE Current $\margs$, $\ext{\Jf}$, $\lambda$
\ENSURE $\margs^* = \prox_{\Fe, \lambda}(\margs)$
\STATE $\vec{v}^0,\, \perm \gets \text{decreasing ordering of $\margs$ and permutation}$ \label{algLine:sortGamma}
\STATE $\vec{v} \gets \vec{v}^0$
\STATE $\g \gets \grad_{\vec{v}}{\Fe} \text{ (as a function of the sorted margins)} $ \label{algLine:GradientComputation}
\STATE $\vec{E} \gets \{\text{constraint } g_i = g_{i+1} = \ldots = g_{i + p}$\\ \hfill$\text{ for each equality } v_i = v_{i+1} = \ldots = v_{i + p} \}
$ \label{algLine:EqualityConstraints}
\STATE $\vec{c}_z \gets \text{constraint } g_{z + 1} = \ldots = g_d$\\ \hfill$\text{ for $z$ minimal index such that $v_{z} < 0$ }$ \label{algLine:ZeroThresholdConstraints}
\STATE $\text{finished} \gets \text{False}$
\WHILE{not finished}\label{algLine:mainWhileLoopProxSingle}
\STATE \textbf{if} $\g = \vec{0}:$ \textbf{break}
\STATE $\g \gets \proj_{\vec{E} \cup \{\vec{c}_z\}}{\g}$ \label{algLine:ProjectionToEqualityAndZeroConstraints}
\STATE $\vec{v}_\text{next} \gets$ projection of $\vec{v}$ on the closest edge of $\Fe$ in the direction $\g$ \label{algLine:NextProjectionOntoEdge}
\STATE $\text{stop} \gets 1/\lambda + \langle \vec{v} - \vec{v}^0, \g \rangle / \langle \g, \g \rangle$
\IF{$\text{stop} < \|\vec{v}_\text{next} - \vec{v}\|$}
  \STATE $\vec{v} \gets \vec{v} + \text{stop} \cdot \g$
  \STATE $\text{finished} \gets \text{True}$
\ELSE
  \STATE{ $\vec{v} \gets \vec{v}_\text{next}$ }
  \STATE{ Add the new constraint to $\vec{E}$ or update $\vec{c}_z$ }
\ENDIF
\ENDWHILE
\RETURN $\margs^* = \vec{v}_{\pi^{-1}}$
\end{algorithmic}
\end{algorithm}

\begin{figure}[ht]
  \centering
  \includegraphics[width=\linewidth]{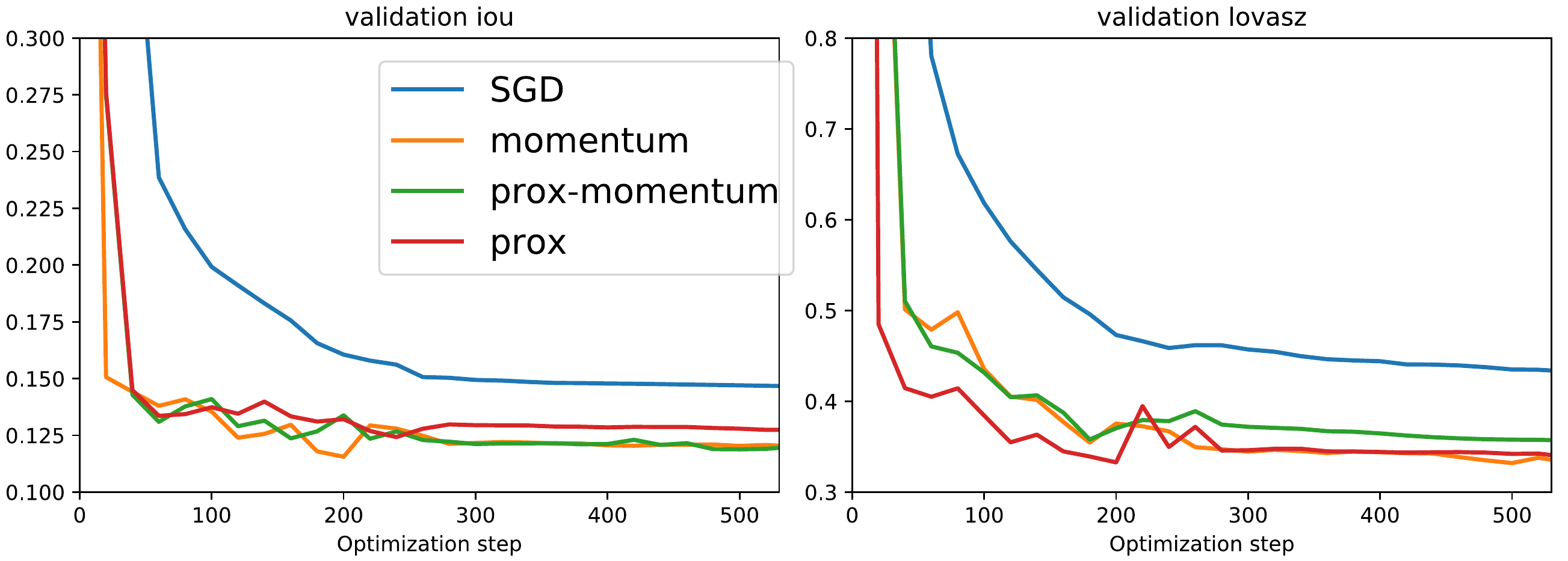}
\caption{Jaccard loss optimization with different optimization methods.}\label{fig:Loss OptimizationLastLayer}
\end{figure}

We investigate the choice of the optimization in terms of empirical convergence rates on the validation data.  We evaluate the use of varying optimization strategies for the last layer of the network in Figure~\ref{fig:Loss OptimizationLastLayer}. Experimentally, we find that the proximal gradient algorithm converges better than stochastic gradient descent alone, and has similar or better performance to stochastic gradient descent with momentum, which it can easily be combined with.

\end{document}

\end{document}